\begin{document}

\title{Understanding Interventional TreeSHAP :\\
How and Why it Works.}

\author{\name Gabriel Laberge \email gabriel.laberge@polymtl.ca \\
       \addr Génie Informatique et Génie Logiciel\\
       Polytechnique Montréal\\
       \AND
       \name Yann Pequignot \email yann.pequignot@iid.ulaval.ca \\
       \addr Institut Intelligence et Données\\
       Université de Laval à Québec}


\maketitle

\begin{abstract}
Shapley values are ubiquitous in interpretable Machine Learning due to their strong theoretical background and efficient implementation in the SHAP library. Computing these values previously induced an exponential cost with respect to the number of input features of an opaque model. Now, with efficient implementations such as Interventional TreeSHAP, this exponential burden is alleviated assuming one is explaining ensembles of decision trees. Although Interventional TreeSHAP has risen in popularity, it still lacks a formal proof of how/why it works. We provide such proof with the aim of not only increasing the transparency of the algorithm but also to encourage further development of these ideas.
Notably, our proof for Interventional TreeSHAP is easily adapted to Shapley-Taylor indices and one-hot-encoded features.
\end{abstract}

\begin{keywords}
  TreeSHAP, Decision Trees, Shapley Values, Attributions, Explainability
\end{keywords}

\section{Introduction}

Ever since their introduction, Random Forests \citep{breiman2001random} and Gradient-Boosted-Trees \citep{friedman2001greedy} have
retained state-of-the-art performance for Machine Learning (ML) tasks on structured data \citep{grinsztajn2022tree}.
Still, by aggregating ensembles of decision trees
rather than considering a single one, both models do away with the inherent transparency of decision trees
in favor of more opaque decision-making mechanisms. Despite having high performance, the black box nature of these models is a considerable hurdle to their widespread applications as
humans are not inclined to use tools that they cannot understand/explain \citep{arrieta2020explainable}.

To this end, research on the subject of explaining
opaque models has been flourishing in the past few years. Notably, a method
called SHAP \citep{lundberg2017unified} has recently been invented to provide explanations of any model decisions in the form of feature attributions, where a score (positive/negative or null) is assigned to each
input feature and is meant to convey how much said feature was used in a
specific model decision. These scores are based on the well-known Shapley values
from coalitional game theory \citep{shapley1953value}. However,
computing these Shapley values for arbitrary models induces an
exponential cost in terms of the number of input features,
which inhibits their application to most ML tasks. 

The exponential burden of Shapley values has lately been
alleviated for tree ensemble models (\textit{e.g.} Random Forests and
Gradient-Boosted-Trees) via two efficient algorithms called Interventional TreeSHAP
and Conditional TreeSHAP
\citep{lundberg2020local}. The two algorithms differ in how their game theory formulation
handles inputs with missing features. The invention of these algorithms
has played a major role in making the SHAP Python library the
most popular framework for shedding light on model
decisions \citep{holzinger2022explainable}. Yet, the original paper is limited to the presentation of the algorithms, without a mathematical proof to justify their correctness. Some effort has subsequently been made to explain the main intuitions on how
Interventional TreeSHAP works, as witnessed for example in the blog of \citet{chen_blog} which contains nice interactive visualizations. Still, to the best of the author's knowledge, a detailed proof that Interventional TreeSHAP works has not yet been written. This is the purpose of this paper and we believe it is important for multiple reasons.

\begin{itemize}
     \item \textbf{Educational:} A correctness proof for an
     algorithm can be used as content for a math-focused class
     on interpretable ML.
    \item \textbf{Transparency:} It brings to light the assumptions made by the algorithm and the theoretical quantities it is computing. It alleviates the possibility
    of practitioners treating TreeSHAP as ``a black box to explain a black box''.
    \item \textbf{Transferability:} The proof for an algorithm allows one to derive proofs for similar algorithms.
\end{itemize}
The main objective of this paper is to provide the first complete
proof of how/why TreeSHAP works. We will focus on the
``Interventional'' setting of TreeSHAP, where features
are perturbed without consideration to the joint data distribution. The reason is that this algorithm is used by default any time practitioners provide a reference dataset to
TreeSHAP\footnote{\url{https://github.com/slundberg/shap/blob/45b85c1837283fdaeed7440ec6365a886af4a333/shap/explainers/\_tree.py\#L69-L72}}. Our contributions are the following:
\begin{enumerate}
    \item We provide the first detailed proof of how/why
    Interventional TreeSHAP works.
    \item The proof is shown to be effortlessly transferable to other Game Theory indices like the Shapley-Taylor indices \citep{sundararajan2020shapley}.
    Moreover, the proof is extended to features that are one-hot encoded, something that is not currently supported in the SHAP library.
    \item We provide C++ implementations for Taylor-TreeSHAP, and Partition-TreeSHAP wrapped in
    Python\footnote{\url{https://github.com/gablabc/Understand_TreeSHAP}}. 
    Crucially, this C++ implementation uses the same notation as in the paper to serve as complementary resources for the interested reader.
\end{enumerate}

The rest of the paper is structured as follows.
We begin by introducing Shapley values in \textbf{Section \ref{sec:shapley_values}}, and 
Decision Trees in \textbf{Section \ref{sec:trees}}. Then,
\textbf{Section \ref{sec:tree_shap}} presents our proof of correctness for the TreeSHAP algorithm. Next, our
theoretical results are leveraged to
effortlessly generalize TreeSHAP to the Shapley-Taylor index
in \textbf{Section \ref{sec:shap_taylor}}, and
to one-hot encoded features in \textbf{Section \ref{sec:embed}}. Finally, \textbf{Section \ref{sec:conclu}} concludes the paper.

\section{Feature Attribution via Shapley Values}\label{sec:shapley_values}

\subsection{Shapley Values}

Coalitional Game Theory studies situations where
$d$ players collaborate toward a common outcome.
Formally, letting $[d]:=\{0, 1, \ldots, d-1\}$ be the set of all $d$ players, 
this theory is concerned with games $\game:2^{[d]}\rightarrow \R$, where 
$2^{[d]}$ is the set of all subsets of $[d]$, which describes the collective 
payoff that any set of players can gain by forming a coalition. In this 
context, the challenge is to assign a credit (score) $\phi_i(\nu)\in \R$ 
to each player $i\in[d]$ based on their contribution toward the total 
value $\game([d])-\game(\emptyset)$ (the collective gain when all 
players join, taking away the gain when no one joins). Namely, such 
scores should satisfy:
\begin{equation}
    \sum_{i=1}^d\phi_i(\game) = \game([d])-\game(\emptyset).
    \label{eq:efficiency}
\end{equation}
The intuition behind this Equation is to think of the outcomes 
$\game([d])$ and $\game(\emptyset)$ as the profit of a company 
involving all employees and no employee. In that case, the
score $\phi_i(\game)$ can be seen as the salary of employee $i$ based on 
their productivity. There exist infinitely many 
score functions that respect Equation \ref{eq:efficiency}, hence it is
necessary to define other desirable properties that a score function 
should possess: Dummy, Symmetry, and Linearity.

\paragraph{Dummy} 
In cooperative games, a player $i$ is called a \emph{dummy}
if $\forall S\subseteq [d]\setminus\{i\}\,\,\,\game(S\cup\{i\})=\game(S)$. 
Dummy players basically never contribute to the game. A desirable property 
of a score is that dummy players should not be given any credit, 
\textit{i.e.} employees that do not work do not make a salary,
\begin{equation}
    \bigg[\forall S\subseteq [d]\setminus\{i\} 
    \,\,\,\game(S\cup\{i\}) = \game(S)\bigg]
    \Rightarrow \phi_i(\game)=0.
    \label{eq:dummy}
\end{equation}

\paragraph{Symmetry}
Another property is symmetry which states that players with equivalent
roles in the game should have the same score \textit{i.e.}
employees with the same productivity should have the same salary
\begin{equation}
    \bigg[\forall S\subseteq[d]\setminus\{i,j\}\quad\game(S\cup\{i\})
    =\game(S\cup\{j\})\bigg] \Rightarrow \phi_i(\game)=\phi_j(\game).
    \label{eq:symmetry}
\end{equation}

\paragraph{Linearity}
The last desirable property is that scores are linear w.r.t games, 
\textit{i.e.} if we let 
$\bm{\phi}(\nu):=[\phi_1(\nu),\phi_2(\nu),\ldots,\phi_d(\nu)]^T$, 
then for all games $\game,\mu: 2^{[d]}\rightarrow \R$ and all 
$\alpha\in \R$, we want
\begin{align}
    \bm{\phi}(\game + \mu) &= 
    \bm{\phi}(\game) + \bm{\phi}(\mu).
    \label{eq:linearity}\\
    \bm{\phi}( \alpha\mu) &= 
    \alpha\bm{\phi}(\mu).
    \label{eq:homogenity}
\end{align}
The reasoning behind this property is a bit more involved than the 
previous two. For Equation \ref{eq:linearity}, imagine that the 
two games $\nu$ and $\mu$ represent two different companies and 
that employee $i$ works at both. In that case, the salary of employee 
$i$ from company $\nu$ should not be affected by their performance 
in the company $\mu$ and vice-versa. Importantly, the total salary 
of employee $i$ ideally should be the sum of the salaries at both 
companies. For Equation \ref{eq:homogenity}, we imagine that the 
company is subject to a law-suit in the end of a quarter which 
results in a sudden reduction in profits by a factor of $\alpha$. Then,
given that each employee had fixed productivity during this quarter, 
it is only fair that all their salaries should also diminish by 
the same factor $\alpha$.

In his seminal work, Lloyd Shapley has proven the existence of a 
\textbf{unique} score function that respects Equations 
\ref{eq:efficiency}-\ref{eq:homogenity}: the so-called Shapley values.

\begin{definition}[Shapley values \citep{shapley1953value}]
Given a set $[d]:=\{1,2,\ldots,d\}$ of players
and a cooperative game $\game:2^{[d]}\rightarrow \R$, the 
Shapley values are defined as
\begin{equation}
    \phi_i(\game) = \mathlarger{\sum}_{S\subseteq [d]\setminus\{i\}} 
    W(|S|, d)\big(\game(S\cup \{i\}) - \game(S)\big)\qquad i\in[d],
\end{equation}
where 
\begin{equation} 
W(k,d):= \frac{k! (d-k-1)!}{d!}
\end{equation}
is the proportion of all $d!$ orderings of $[d]$ where a given subset 
$S$ of size $k$ appears first, directly followed by a distinguished 
element $i\notin S$.
\label{def:shapley}
\end{definition}

Intuitively, the credit $\phi_i(\game)$ is attributed to each player $i\in [d]$ based on the average contribution of adding them to coalitions $S$ that excludes them. Notice that the number of terms in
\textbf{Definition \ref{def:shapley}} is exponential in the number of players.

We record here a simple property of Shapley values with respect to dummy players that will be essential in our derivations.
\begin{lemma}[Dummy Reduction]
Let $D\subseteq [d]$ be the set of all dummies, and $D^C=[d]\setminus D$ be the set of non-dummy players, then 
\begin{equation}
    \phi_i(\game) = 
    \begin{cases}
       0 &\quad\text{if}\,\, i\in D\\
       \sum_{S\subseteq D^C \setminus\{i\}} W(|S|, |D^C|)\big(\game(S\cup \{i\}) - \game(S)\big) &\quad\text{otherwise.} 
     \end{cases}
\end{equation}
Simply put, dummy players are given no credit, and the credit of all remaining players follows the basic Shapley definition while assuming that the set of players is $D^C$ instead of $[d]$.
\label{lemma:dummy}
\end{lemma}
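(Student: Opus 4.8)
The plan is to treat the two cases of the claim separately, with the dummy case being immediate and the non-dummy case carrying all the weight. For $i \in D$, I would note directly from Definition \ref{def:shapley} that since $i$ is a dummy, every marginal contribution $\game(S\cup\{i\}) - \game(S)$ vanishes for $S \subseteq [d]\setminus\{i\}$, so each summand is zero and $\phi_i(\game) = 0$. Equivalently, one may simply invoke that the Shapley values satisfy the Dummy property of Equation \ref{eq:dummy}.

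The substance lies in the case $i \in D^C$. The first step is an auxiliary fact: that $\game$ ``ignores'' dummies, i.e.\ $\game(T) = \game(T \cap D^C)$ for every $T \subseteq [d]$. I would prove this by peeling off the dummies in $T$ one at a time — if $j \in T \cap D$, then applying the dummy condition with $S = T\setminus\{j\}$ gives $\game(T) = \game(T\setminus\{j\})$ — and iterating until only non-dummy members remain. An immediate consequence is that, for $i \in D^C$ and any $S \subseteq [d]\setminus\{i\}$, the marginal contribution depends only on $R := S \cap D^C$, namely $\game(S\cup\{i\}) - \game(S) = \game(R\cup\{i\}) - \game(R)$.

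With this in hand, the plan is to regroup the exponential sum of Definition \ref{def:shapley} according to the value of $R = S\cap D^C$. Every $S \subseteq [d]\setminus\{i\}$ decomposes uniquely as a disjoint union $S = R \sqcup A$ with $R \subseteq D^C\setminus\{i\}$ and $A \subseteq D$, and $|S| = |R| + |A|$; since the marginal contribution only depends on $R$, collecting terms yields
\begin{equation}
\phi_i(\game) = \sum_{R \subseteq D^C\setminus\{i\}} \big(\game(R\cup\{i\}) - \game(R)\big) \sum_{A \subseteq D} W(|R| + |A|, d).
\end{equation}
It then remains to establish the weight identity $\sum_{A\subseteq D} W(|R|+|A|, d) = W(|R|, |D^C|)$, which I expect to be the main obstacle.

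For that identity I would favor a probabilistic reading over a brute-force binomial manipulation. Recall that $W(|S|, d)$ is exactly the probability that, in a uniformly random ordering of $[d]$, the set of players preceding $i$ is precisely $S$; hence Definition \ref{def:shapley} expresses $\phi_i(\game)$ as the expected marginal contribution of $i$ to its set of predecessors. Restricting a uniform random ordering of $[d]$ to $D^C$ induces a uniform random ordering of $D^C$, and — because the marginal contribution only sees $R = S\cap D^C$, the non-dummy predecessors of $i$ — the expectation over orderings of $[d]$ collapses to the same expectation taken over orderings of $D^C$. This is precisely the Shapley value of $i$ in the game played on the reduced player set $D^C$, which gives the claimed formula and certifies the weight identity. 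The only delicate point to verify carefully is that the induced ordering on $D^C$ is genuinely uniform and that non-dummy predecessors are preserved under restriction, both of which are standard facts about restrictions of random permutations.
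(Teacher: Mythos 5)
Your proposal is correct, but it takes a genuinely different route from the paper. The paper removes dummies one at a time: it pairs each $S\subseteq[d]\setminus\{i,j\}$ with $S\cup\{j\}$ for a single dummy $j$, uses the dummy condition to equate the two marginal contributions, applies the elementary factorial identity $W(k,d)+W(k+1,d)=W(k,d-1)$, and then iterates (``repeat this process for all dummies''), i.e.\ an implicit induction on $|D|$ that shrinks the player set by one at each step. You instead do the reduction in a single pass: first establishing $\game(T)=\game(T\cap D^C)$, then decomposing each coalition as $S=R\sqcup A$ with $R\subseteq D^C\setminus\{i\}$, $A\subseteq D$, and collapsing the weights via $\sum_{A\subseteq D}W(|R|+|A|,d)=W(|R|,|D^C|)$, which you certify through the random-ordering interpretation of $W$ (restriction of a uniform ordering of $[d]$ to $D^C$ is uniform, and non-dummy predecessors are preserved). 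Note that your summed weight identity is exactly what one obtains by iterating the paper's one-step identity $|D|$ times, so the two arguments are combinatorially equivalent; the paper's version buys a shorter, purely algebraic verification (a two-line factorial computation) at the cost of an induction left implicit, while yours buys a one-shot, conceptually transparent argument at the cost of invoking the probabilistic reading of the weights --- which, to be fair, the paper itself supplies immediately after Definition~\ref{def:shapley}, so nothing extraneous is assumed. You also handle the $i\in D$ case explicitly, which the paper leaves to the Dummy axiom; both treatments are fine.
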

\begin{proof}
Let $j$ be a dummy but not $i$, we have
\begin{align*}
    \phi_i(\game) &= \mathlarger{\sum}_{S\subseteq [d]\setminus\{i,j\}} W(|S|, d)\big(\game(S\cup \{i\}) - \game(S)\big) +
    W(|S|+1, d)\big(\game(S\cup \{i,j\}) - \game(S\cup\{j\})\big)\\
    &=\mathlarger{\sum}_{S\subseteq [d]\setminus\{i,j\}} \big(W(|S|, d) + W(|S|+1, d)\big)\big(\game(S\cup \{i\}) - \game(S)\big)\\
    &=\mathlarger{\sum}_{S\subseteq [d]\setminus\{i,j\}} W(|S|, d-1)\big(\game(S\cup \{i\}) - \game(S)\big).
\end{align*}
Repeat this process for all dummies.
\end{proof}
This property is key as it implies that Shapley values of non-dummy players are unaffected by the
inclusion of an arbitrarily large number of new dummy players. It also means that, once all dummy players are identified, they can be ignored and the
Shapley value computations will only involve the set $D^C$ non-dummy players.

\subsection{Feature Attribution as a Game}
For a long time, the focus of Machine Learning (ML) has been on improving generalization performance, regardless of the cost of model interpretability. For instance, because of their state-of-the-art performance, tree ensembles are typically used instead of individual decision trees which are inherently more interpretable. However, the ML community has reached a point where the lack of transparency of the best-performing models inhibits
their widespread acceptance and deployment \citep{arrieta2020explainable}. To address this important
challenge, the research community has been working for the past few years on techniques that shed light on the decision-making of
opaque models. A popular subset of such techniques provide insight on model decisions in the form of feature attribution
\textit{i.e.} given an opaque model $h$ and an input $\bm{x}\in \R^d$, a score $\phi_i(h, \bm{x})\in \R$ is provided to each feature $i\in [d]$ in order
to convey how much feature $i$ was used in the decision $h(\bm{x})$. Given the lack of ground truth for the feature attribution for an opaque model, the community has focused on exploiting the existence and uniqueness of Shapley values to the problem of explaining ML models \citep{sundararajan2020many}.

Following the BaselineShap formulation of \cite{sundararajan2020many}, 
when explaining the model prediction $h(\bm{x})$ at some input $\bm{x}$ of interest, we can compare said prediction to the prediction $h(\bm{z})$ at a baseline value of the input $\bm{z}$. In this context, feature attributions can be formulated as feature contributions towards the gap $h(\bm{x})-h(\bm{z})$ viewed as the total value of a carefully designed coalitional game. Defining such a game requires specifying the collective payoff for every subset of features. In BaselineShap, this is achieved by defining the collective payoff for a given set $S$ of features to be the prediction of the model at the input whose value for features from $S$ are given by the point of interest $\bm{x}$, while other 
feature values are taken from the baseline point $\bm{z}$.
Formally, given an input of interest $\bm{x}$, a subset of features $S\subseteq [d]$ that are considered active, and the baseline $\bm{z}$, the replace function $\bm{r}^{\bm{z}}_S:\R^d\rightarrow \R^d$, defined as
\begin{equation}
    \replacei{S} = 
        \begin{cases}
        x_i & \text{if  } i \in S\\
        z_i & \text{otherwise},
    \end{cases}
\label{eq:replace_fun}
\end{equation}
is used to simulate the action of activating features in $S$ (or equivalently shutting down features not in $S$).
The resulting coalitional game for ML interpretability is the following.

\begin{definition}[The Baseline Interventional Game]
Comparing the model predictions at $\bm{x}$ and $\bm{z}$ can be done by computing the Shapley
values of the following game:
\begin{equation}
    \game_{h,\bm{x},\bm{z}}(S):= h(\,\replace{S}\,).
\end{equation}
This formulation is called Interventional because we intervene on feature $i$ by replacing its baseline value $z_i$ with the value $x_i$ irrespective of the value of other features. Hence we break correlations between the various features.
\label{def:explain_game}
\end{definition}
By  Equation \ref{eq:efficiency}, the resulting Shapley
values $\bm{\phi}(\game_{h,\bm{x},\bm{z}}) \equiv \shapvec{h}$ will sum to the gap
\begin{equation}
    \sum_{i=1}^d\shap{i}{h}=h(\bm{x})-h(\bm{z}).
\end{equation}
Therefore, Shapley values following this Interventional
game formulation are advertised as a mean to answer contrastive questions such as : \textit{Why does my model attribute a higher risk of
stroke to person A compared to person B?}

We finish this Section with a toy example that illustrates how Interventional Shapley values are computed. We shall be explaining a two-dimensional \texttt{AND} function
\begin{equation}
    h(\bm{x}) = \mathbbm{1}(x_0>0) \mathbbm{1}(x_1>0).
\end{equation}

Our goal is to explain the discrepancy between $h(\bm{x})=1$ with $x=(1, 1)^T$ and $h(\bm{z})=0$
with $\bm{z}=(\minus1, \minus1)^T$. Let us begin by computing
$\shap{0}{h}$ by directly plugging in the definition

\begin{equation}
\begin{aligned}
    \shap{0}{h} &= \mathlarger{\sum}_{S\subseteq \{1\}} \frac{|S|!(d-|S|-1)!}{d!}\big[\,h(\,\replace{S\cup \{0\}}\,) - h(\,\replace{S}\,)\,\big]\\
    &=\frac{1}{2}\big[\,h(\,\replace{\{0, 1\}}\,) - h(\,\replace{\{1\}}\,)\,\big] + \frac{1}{2}\big[\,h(\,\replace{\{0\}}\,) - h(\,\replace{\emptyset}\,)\,\big]\\
    &=\frac{1}{2}\big(\,h(1, 1) - h(\minus1, 1)\,\big) + \frac{1}{2}\big(\,h(1, \minus1) - h(\minus1, \minus1)\,\big)\\
    &=\frac{1}{2}\times 1 + \frac{1}{2}\times 0 =\frac{1}{2}.
\end{aligned}
\end{equation}

Same thing for $\shap{1}{h}$. We see that $\shap{0}{h}=\shap{2}{h}$, which is aligned with the symmetry property (cf. Equation \ref{eq:symmetry}), and that $\shap{0}{h}+\shap{1}{h}=1=h(\bm{x})-h(\bm{z})$. Figure \ref{fig:toy_example} shows the intuition behind this computation. We see there are two coordinate-parallel paths that go from $\bm{z}$ to $\bm{x}$. We average the contribution of changing the $i$th component across these two paths.

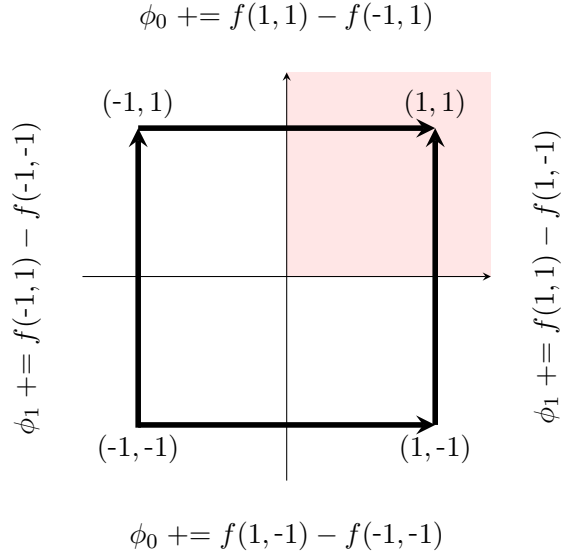
\begin{figure}[t]
    \centering
    \captionsetup{justification=centering}
    \resizebox{8.125cm}{!}{
        \resizebox{0.6\linewidth}{!}{
\begin{tikzpicture}
\draw[fill=white!90!red,draw=none]  (0, 2.75) rectangle (2.75,0);

\draw[-stealth] (-2.75, 0) -- (2.75, 0);
\draw[-stealth] (0, -2.75) -- (0, 2.75);

\node[anchor=south] at (2,2) {$(1,1)$};
\node[anchor=south] at (-2,2){$(\minus1,1)$};
\node[anchor=north] at (2,-2) {$(1,\minus1)$};
\node[anchor=north] at (-2,-2){$(\minus1,\minus1)$};

\draw[-stealth,line width=0.075cm] (-2,-2) -- node[yshift=-1.5cm] {$\phi_0 \pluseq f(1,\minus1) - f(\minus1,\minus1)$} (2,-2);
\draw[-stealth,line width=0.075cm] (-2,2) -- node[yshift=1.5cm] {$\phi_0 \pluseq f(1,1) - f(\minus1,1)$} (2,2);
\draw[-stealth,line width=0.075cm] (-2,-2.037) -- node[xshift=-1.5cm,rotate=90] {$\phi_1 \pluseq f(\minus1,1) - f(\minus1,\minus1)$} (-2,2);
\draw[-stealth,line width=0.075cm] (2,-2) -- node[xshift=1.5cm,rotate=90] {$\phi_1 \pluseq f(1,1) - f(1,\minus1)$} (2,2);

\end{tikzpicture}
}
    }
    \caption{Example of Shapley values computation.}
    \label{fig:toy_example}
\end{figure}

\section{Tree-Based Models}\label{sec:trees}

\subsection{Decision Tree}

A directed graph $G=(N, E)$ is a set of nodes $n\in N$ and edges 
$e\in E \subset N\times N$. The node at the tail of $e$ is $e_0\in N$ while the node at the head of the edge is $e_1\in N$. The node $e_1$ is called a child of $e_0$. A full binary tree is a rooted tree in which every node has exactly two children (a left one and a right one) or none. We view such a graph as a directed graph by making all its edges point away from the root. We denote a full binary tree by $T=(N,E)$ where $N=\{0,1,\ldots, |N|-1\}$ is the set of nodes and $E\subset N\times N$ is the set of directed edges and write $0 \in N$ for the root. An \emph{internal node} is a node $n$ which has two children $l<r$ and $l$ is called the left child of $n$ and $r$ is called the right child of $n$. In this case, we say that $(n,r)$ is a right edge and $(n,l)$ is a left edge. A leaf is a node with no children and all leaves are stored in the set $L\subseteq N$. 

\begin{definition}[Binary Decision Tree]
A Binary Decision Tree on $\R^d$ is full binary tree $T=(N,E)$ in which every internal node $n$ is labeled by a pair $(i_n,\gamma_n)\in [d]\times \R$ and every leaf $l\in L$ is labeled with a value $v_l$. For an internal node $n$, the label $(i_n,\gamma_n)$ encodes the boolean function $R_n:\R^d\to\{0,1\}$ given by $R_n(\bm{x}):= \mathbbm{1}(x_{i_n}\leq \gamma_n)$.
\end{definition}

See Figure \ref{fig:tree_example} for a simple example of Binary Decision Tree. Any Binary Decision Tree induces a function $h:\R^d\to \R$ defined as follows. For every $\bm{x}\in \R^d$, we start at the top of the tree (the root) and check the condition $R_0(\bm{x}):= \mathbbm{1}(x_{i_0}\leq \gamma_0)$: if it is true we
go down to the left child of the root, and if it is false we go down to the right child.
This procedure is repeated until we reach a leaf node $l$
and the model outputs $h(\bm{x})=v_l$.
For instance, in Figure \ref{fig:tree_example}, we see that for the
input $\bm{x}=(3.4, 0.2, 2)^T$, we go from the root to the
node 1 and end up at the leaf 4, so $h(\bm{x})=v_4$. We note that the input ``flowed through'' the sequence of edges $(\,(0,1),(1,4)\,)$ which goes from root to leaf. We shall refer to such a sequence as a maximal path.

\begin{definition}[Maximal Path]
A directed path $P$ in a full binary tree $T=(N,E)$ is sequence $P=(e^{[k]})_{k=0}^{\ell-1}$ of edges in $E$ such that the $e_1^{[k]} = e_0^{[k+1]}$ for all $k=0, 1,\ldots,\ell-2$. A maximal path is a directed path $P=(e^{[k]})_{k=0}^{\ell-1}$ that cannot be extended (and if it has a positive length then it starts at the root and ends at a leaf).
\label{def:path}
\end{definition}

Examples of maximal paths in this decision tree of Figure \ref{fig:tree_example} include $P=(\,(0,1), (1,4)\,)$ and $P=(\,(0,2), (2,5)\,)$.
Given the definition of maximal path, we now aim at describing the model $h$ in a more formal manner. Since our intuition about $h$ is that $\bm{x}$ flows downward on edges selected by the Boolean functions, our notation should only involve edges. Hence, for an edge
$e=(e_0,e_1)$, we define $i_e$ as
the feature index of the internal node $e_0$ at its tail.
In Figure \ref{fig:tree_example}, we have that $i_{(1,4)}=2$ for example.
Also, for each edge $e=(e_0,e_1)$ we define the Boolean function $R_e(\bm{x})$ by $R_{e_0}(\bm{x})$ if $e$ is a left edge and $1- R_{e_0}(\bm{x})$ otherwise. When $R_e(\bm{x})=1$ for some edge $e$, we shall say that $\bm{x}$ ``flows though'' the edge $e$, a terminology that will be used throughout this paper.
Now, for each maximal path $P$ ending at some leaf $l\in L$ we define $h_P:\R^d\to \R$ by 
\begin{equation}
h_P(\bm{x})=v_l\prod_{e\in P}R_e(\bm{x}).
\label{eq:h_p}
\end{equation}
This step function outputs zero unless the input flows through all the edges in the maximal path $P$.
For example, in Figure \ref{fig:tree_example}, the function $h_P$ associated with  $P=(\,(0, 1), (1,4)\,)$ is $h_P(\bm{x})=v_4\mathbbm{1}(x_1\leq 0.5)\mathbbm{1}(x_2>1.33)$.
Finally, given an input $\bm{x}$, there exists only one maximal path $P$ such that $h_P(\bm{x})\neq 0$. This is because any given $\bm{x}$ can only flow through one path $P$ from root to leaf.
Hence, we can interpret $h$ as
\begin{equation}\label{eq:DT_sumPath}
    h(\bm{x})=\sum_{P} h_P(\bm{x}),
\end{equation} 
where the sum is taken over all maximal paths in the Decision Tree.
Note that the decision tree function $h$ of Figure \ref{fig:tree_example} can be written as 
\begin{align*}
h(\bm{x})=\quad 
&v_3\mathbbm{1}(x_1\leq0.5)\mathbbm{1}(x_2\leq1.33)+
v_4\mathbbm{1}(x_1\leq 0.5)\mathbbm{1}(x_2>1.33)\\
+
&v_5\mathbbm{1}(x_1>0.5)\mathbbm{1}(x_0\leq0.25)+
v_6\mathbbm{1}(x_1> 0.5)\mathbbm{1}(x_0>0.25).
\end{align*}

\begin{figure}[t!]
    \centering
    \captionsetup{justification=centering}
    \begin{tikzpicture}

\tikzset{nodestyle/.style={scale=1.33,draw=black,shape=circle,fill=white!97!black}}
\tikzset{edgetyle/.style={-stealth, line width=1mm}}

\def\xstep{2.5};
\def\ystep{2.5};
\node[nodestyle] at (0,0) (0) {0};
\node[nodestyle] at (-\xstep,-\ystep) (1) {1};
\node[nodestyle] at (\xstep,-\ystep) (2) {2};
\node[nodestyle] at (-3*\xstep/2,-2*\ystep) (3) {3};
\node[nodestyle] at (-\xstep/2,-2*\ystep) (4) {4};
\node[nodestyle] at (\xstep/2,-2*\ystep) (5) {5};
\node[nodestyle] at (3*\xstep/2,-2*\ystep) (6) {6};

\node[yshift=1.4cm] at (0) 
{$\begin{aligned}
    i_0&=1\\ 
    \gamma_0&=0.5\\ 
    \bm{x_1}&\bm{
    \leq0.5?}
\end{aligned}$};
\node[yshift=1.4cm,xshift=-0.65cm] at (1) 
{$\begin{aligned}
    i_1&=2\\ 
    \gamma_1&=1.33\\ 
    \bm{x_2}&\bm{
    \leq1.33?}
\end{aligned}$};
\node[yshift=1.4cm,xshift=0.65cm] at (2) 
{$\begin{aligned}
    i_2&=0\\ 
    \gamma_2&=0.25\\ 
    \bm{x_0}&\bm{
    \leq0.25?}
\end{aligned}$};

\foreach \t in {3,...,6}
{
	\node[yshift=-0.75cm] at (\t) {$v_\t$};
}

\draw[edgetyle] (0) -- node[xshift=-0.25cm,yshift=0.5cm] {\textbf{Yes}} (1);
\draw[edgetyle] (0) -- node[xshift=0.25cm,yshift=0.5cm] {\textbf{No}} (2);
\draw[edgetyle] (1) -- (3);
\draw[edgetyle] (1) -- (4);
\draw[edgetyle] (2) -- (5);
\draw[edgetyle] (2) -- (6);

\draw[dashed,-stealth,line width=0.5mm, red]  
plot[smooth, tension=.7] coordinates {(0,-0.5) (-1.8,-2.5) (-1,-4.5)};

\end{tikzpicture}
    \caption{Basic example of Binary Decision Tree. In \red{red} 
    we highlight the maximal path followed by the input 
    \red{$\bm{x}=(3.4, 0.2, 2)^T$}.}
    \label{fig:tree_example}
\end{figure}
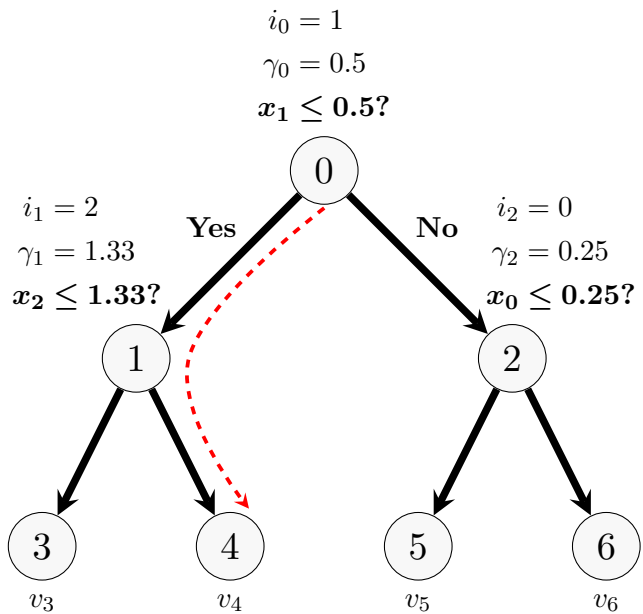
\newpage
\section{Interventional Tree SHAP}\label{sec:tree_shap}

As previously highlighted, computing Shapley values (cf. \textbf{Definition \ref{def:shapley}}) requires summing over an exponential number of subsets $S\subseteq [d]\setminus \{i\}.$ This exponential complexity previously prohibited the application of Shapley values to ML tasks which typically involve a large number of features. Nonetheless, 
by exploiting the structure of Decision Trees, Interventional TreeSHAP alleviates this exponential complexity in $d$ at the cost of no longer being model-agnostic. We now describe step-by-step how and why TreeSHAP works.

In this section, we shall assume $\bm{x}$ and $\bm{z}$ are fixed and we are concerned with computing Shapley values for a forest $\mathcal{F}$ of decision trees, namely $\bm{\phi}(\mathcal{F},\bm{x},\bm{z})$. A first application of linearity (cf. Equations \ref{eq:linearity} \& \ref{eq:homogenity}) shows that 
\begin{align}
    \bm{\phi}(\mathcal{F},\bm{x},\bm{z})
    &= \bm{\phi}\bigg(\frac{1}{|\mathcal{F}|}\sum_{h\in \mathcal{F}} h,\bm{x},\bm{z}\bigg)\\
    &=\frac{1}{|\mathcal{F}|}\sum_{h\in \mathcal{F}}\bm{\phi}(h,\bm{x},\bm{z}),
\end{align}
Henceforth, we can thus fix a decision tree $h$ and focus on how to compute $\bm{\phi}(h,\bm{x},\bm{z})$.
\subsection{Naive Tree Traversal}

Since the function represented by the decision tree $h$ can be written as a sum over all maximal paths (cf. Equation~\ref{eq:DT_sumPath}), a second application of linearity (cf. Equation \ref{eq:linearity}) allows us to derive
\begin{equation}
    \begin{aligned}
    \bm{\phi}(h,\bm{x},\bm{z})
    &= \bm{\phi}\bigg(\sum_Ph_P,\bm{x},\bm{z}\bigg)\\
    &= \sum_P\bm{\phi}(h_P,\bm{x},\bm{z}),
    \end{aligned}
    \label{eq:lin_hp}
\end{equation}
Shapley values for a decision tree are therefore equal to the sum of Shapley values over all maximal paths in the tree. This means that the feature attributions can be computed via the following tree-traversal algorithm.

\begin{algorithm}
\caption{Tree SHAP Naive Pseudo-code}
\begin{algorithmic}[1]
\Procedure{Recurse}{$n$}
    \If{$n\in L$}
         \State $\bm{\phi} \pluseq \bm{\phi}(h_P, \bm{x}, \bm{z})$\Comment{Add contribution of the maximal path}
    \Else
        \State \Call{Recurse}{$n_\text{left child}$};
        \State \Call{Recurse}{$n_\text{right child}$};
    \EndIf
\EndProcedure
\State $\bm{\phi}=\bm{0}$;
\State \Call{Recurse}{0};
\end{algorithmic}
\label{alg:simplest}
\end{algorithm}

\subsection{Compute the Shapley Value of a Maximal Path}

We can now fix a maximal path $P$ in the decision tree $h$ and focus on the Shapley values $\bm{\phi}(h_P,\bm{x},\bm{z})$.
We first identify four types of edges $\typeX,\typeZ,\typeF,\typeB$ which can occur in $P$.
\begin{definition}[Edge Type]
We say that an edge $e\in E$ is of 
\begin{equation}
    \begin{aligned}
    &\text{Type \typeX} \fsp \text{if } \fsp R_e(\bm{x})=1 \text{ and }  R_e(\bm{z})=0\\
    &\text{Type \typeZ} \fsp \text{if } \fsp R_e(\bm{x})=0 \text{ and }  R_e(\bm{z})=1\\
    &\text{Type \typeF} \fsp \text{if } \fsp R_e(\bm{x})=1 \text{ and }   R_e(\bm{z})=1\\
    &\text{Type \typeB} \fsp \text{if } \fsp R_e(\bm{x})=0 \text{ and }  R_e(\bm{z})=0.
    \end{aligned}
\end{equation}
Here \typeX stands for $\bm{x}$ flows, \typeZ stands for
$\bm{z}$ flows, \typeF stands for both Flow,  and \typeB stands for both are Blocked.

\label{def:edge_types}
\end{definition}
See Figure \ref{fig:4_types} for an informal example of each of these types of edges for a fixed maximal path $P$. Moreover, Figure \ref{fig:edges_types_example} presents the previous
example of a simple decision tree where each edge is colored with respect to its type.
From this point on in the document, we will systematically color edges w.r.t their type.

\begin{figure}[t]
    \centering
    \captionsetup{justification=centering}
    \begin{tikzpicture}
\def\step{3.3};
\foreach \x in {0,...,4}
{
	\node[scale=1.33,draw=black,shape=circle,fill=white!97!black] at (\step*\x,0) (\x) {\x};
}

\draw[color=green!70!black,-stealth, line width=1mm] (0) -- (1);
\draw[color=red,-stealth,line width=1mm] (1) -- (2);
\draw[color=blue,-stealth,line width=1mm] (2) -- (3);
\draw[-stealth,line width=1mm] (3) -- (4);

\node at (\step/2,-0.5) {\typeX};
\node at (3*\step/2,-0.5) {\typeZ};
\node at (5*\step/2,-0.5) {\typeF};
\node at (7*\step/2,-0.5) {\typeB};

\node at (1*\step/2,-1) {$\bm{x}$ flows};
\node at (3*\step/2,-1) {$\bm{x}$ does not flow};
\node at (5*\step/2,-1) {$\bm{x}$ flows};
\node at (7*\step/2,-1) {$\bm{x}$ does not flow};

\node at (1*\step/2,-1.5) {$\bm{z}$ does not flow};
\node at (3*\step/2,-1.5) {$\bm{z}$ flows};
\node at (5*\step/2,-1.5) {$\bm{z}$ flows};
\node at (7*\step/2,-1.5) {$\bm{z}$ does not flow};

\end{tikzpicture}
    \caption{All types of edges in a maximal path $P=(\,(0,1), (1,2), (2, 3), (3,4)\,)$ from root to leaf.}
    \label{fig:4_types}
\end{figure}
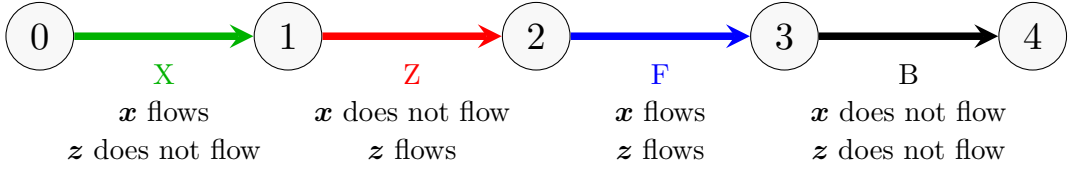

\begin{figure}[t]
    \centering
    \captionsetup{justification=centering}
    \begin{tikzpicture}

\tikzset{nodestyle/.style={scale=1.33,draw=black,shape=circle,fill=white!97!black}}
\tikzset{edgetyle/.style={-stealth, line width=1mm}}

\def\xstep{2.5};
\def\ystep{2.5};
\node[nodestyle] at (0,0) (0) {0};
\node[nodestyle] at (-\xstep,-\ystep) (1) {1};
\node[nodestyle] at (\xstep,-\ystep) (2) {2};
\node[nodestyle] at (-3*\xstep/2,-2*\ystep) (3) {3};
\node[nodestyle] at (-\xstep/2,-2*\ystep) (4) {4};
\node[nodestyle] at (\xstep/2,-2*\ystep) (5) {5};
\node[nodestyle] at (3*\xstep/2,-2*\ystep) (6) {6};

\node[yshift=1.25cm] at (0) 
{$\begin{aligned}
    i_0&=1\\ 
    \gamma_0&=0.5
\end{aligned}$};
\node[yshift=1.25cm,xshift=-0.6cm] at (1) 
{$\begin{aligned}
    i_1&=2\\ 
    \gamma_1&=1.33
\end{aligned}$};
\node[yshift=1.25cm,xshift=0.6cm] at (2) 
{$\begin{aligned}
    i_2&=0\\ 
    \gamma_2&=0.25
\end{aligned}$};

\foreach \t in {3,...,6}
{
	\node[yshift=-0.75cm] at (\t) {$v_\t$};
}

\draw[edgetyle,color=blue] (0) -- (1);
\draw[edgetyle] (0) --  (2);
\draw[edgetyle,color=green!70!black] (1) -- (3);
\draw[edgetyle,color=red] (1) -- (4);
\draw[edgetyle, color=blue] (2) -- (5);
\draw[edgetyle] (2) -- (6);

\end{tikzpicture}
    \caption{Example of types of edges in a Decision Tree. The input and reference inputs are $\bm{x}=(0, 0, 1)^T$ and $\bm{z}=(-2, -1, 2)^T$.}
    \label{fig:edges_types_example}
\end{figure}
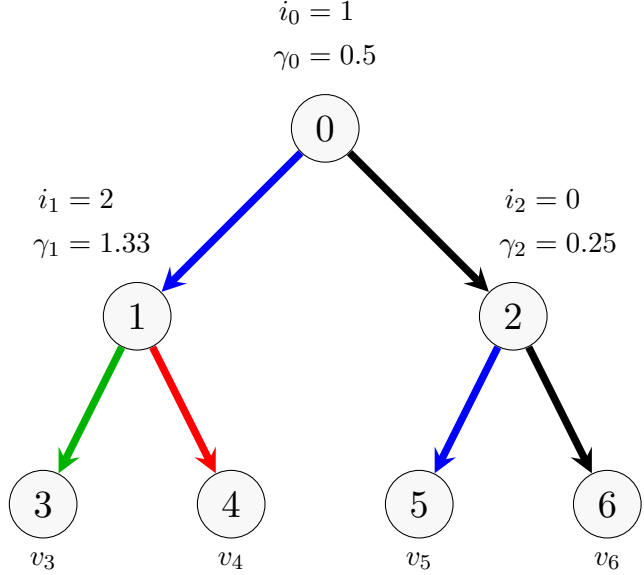
\begin{lemma}
If $P$ contains an edge of type $\typeB$, then 
$$\forall\, S\subseteq [d]\quad h_P(\,\replace{S}\,)=0.$$
\label{lemma:avoid_D}
\end{lemma}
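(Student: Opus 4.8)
The plan is to exploit the structural fact that each edge Boolean function $R_e$ depends on only a single coordinate of its input, namely the feature index $i_e$ split upon at the tail node $e_0$. Indeed, by its definition $R_e(\bm{y})$ equals either $\mathbbm{1}(y_{i_e}\leq \gamma_{e_0})$ or its complement $1-\mathbbm{1}(y_{i_e}\leq \gamma_{e_0})$, so its value is entirely determined by the scalar $y_{i_e}$. This observation is the crux of the argument and everything else is a direct substitution.

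First I would fix a maximal path $P$ containing an edge $e^\ast$ of type $\typeB$, which by \textbf{Definition \ref{def:edge_types}} means $R_{e^\ast}(\bm{x})=0$ and $R_{e^\ast}(\bm{z})=0$. Next, for an arbitrary $S\subseteq[d]$, I would inspect the single relevant coordinate $i_{e^\ast}$ of the perturbed input $\replace{S}$. By the replace function (Equation \ref{eq:replace_fun}), this coordinate equals $x_{i_{e^\ast}}$ when $i_{e^\ast}\in S$ and $z_{i_{e^\ast}}$ otherwise. Since $R_{e^\ast}$ sees only coordinate $i_{e^\ast}$, in the first case $R_{e^\ast}(\replace{S})=R_{e^\ast}(\bm{x})=0$ and in the second $R_{e^\ast}(\replace{S})=R_{e^\ast}(\bm{z})=0$; either way the factor vanishes.

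Finally, I would substitute this into the product formula for $h_P$ (Equation \ref{eq:h_p}): because $R_{e^\ast}(\replace{S})=0$ occurs as one of the factors in $\prod_{e\in P}R_e(\replace{S})$, the whole product, and hence $h_P(\replace{S})=v_l\prod_{e\in P}R_e(\replace{S})$, is zero for every $S\subseteq[d]$, as claimed.

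There is no substantial obstacle here; the only point demanding care is stating clearly that $R_{e^\ast}$ depends on a single coordinate, so that replacing that coordinate by either its $\bm{x}$-value or its $\bm{z}$-value reproduces exactly $R_{e^\ast}(\bm{x})$ or $R_{e^\ast}(\bm{z})$, both of which are $0$ under the type-$\typeB$ hypothesis. This lemma will later justify discarding paths containing a $\typeB$ edge, since they contribute nothing to the game $\game_{h,\bm{x},\bm{z}}$ and hence nothing to the path's Shapley values.
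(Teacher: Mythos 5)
Your proof is correct and follows essentially the same route as the paper: both arguments observe that for a type-$\typeB$ edge $e$ the factor $R_e(\replace{S})$ vanishes for every $S\subseteq[d]$, killing the product in Equation \ref{eq:h_p}. You merely make explicit the case analysis ($i_e\in S$ versus $i_e\notin S$) that the paper leaves implicit, which is a harmless and arguably clarifying addition.
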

\begin{proof}
Let $e$ be an edge of type $\typeB$ in $P$. Then for every $S\subseteq [d]$, we have $R_e(\replace{S})=0$, which implies that $h_P(\replace{S})\propto R_e(\replace{S})=0$.
\end{proof}

\begin{corollary}
If $P$ contains an edge of \typeB, 
then $\shapvec{h_P} = \bm{0}$.
\label{corollary:shap_avoid_D}
\end{corollary}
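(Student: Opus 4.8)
The plan is to observe that this statement is an immediate consequence of \textbf{Lemma \ref{lemma:avoid_D}} combined with the explicit form of the Shapley value. Recall that the Shapley values $\shapvec{h_P}$ are those of the game $\game_{h_P,\bm{x},\bm{z}}(S) = h_P(\,\replace{S}\,)$. So the first step is simply to invoke \textbf{Lemma \ref{lemma:avoid_D}}: since $P$ contains an edge of type $\typeB$, we have $h_P(\,\replace{S}\,)=0$ for every $S\subseteq[d]$. In other words, the game $\game_{h_P,\bm{x},\bm{z}}$ is the identically-zero game.

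Once this is established, I would finish in one of two equivalent ways. The direct route is to plug the zero game into \textbf{Definition \ref{def:shapley}}: for each feature $i\in[d]$, every marginal contribution appearing in the sum is $\game_{h_P,\bm{x},\bm{z}}(S\cup\{i\}) - \game_{h_P,\bm{x},\bm{z}}(S) = 0 - 0 = 0$, so $\shap{i}{h_P}=0$. Alternatively, I could note that the zero game equals $0\cdot\mu$ for any game $\mu$ and appeal to homogeneity (cf.\ Equation \ref{eq:homogenity}) with $\alpha=0$ to conclude $\shapvec{h_P}=0\cdot\bm{\phi}(\mu)=\bm{0}$. Either way the vector of Shapley values vanishes.

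There is essentially no obstacle here: the only content is recognizing that a game which is constant zero on all coalitions produces only zero marginal contributions, so the result follows formally from the definition. The lemma does all the work by ruling out any nonzero value of the path function under the replace operation, and the corollary merely translates ``the game is zero'' into ``the attributions are zero.'' The one point worth stating cleanly in the write-up is the identification of $\shapvec{h_P}$ with the Shapley values of the game $\game_{h_P,\bm{x},\bm{z}}$, so that \textbf{Lemma \ref{lemma:avoid_D}} can be applied to that game.
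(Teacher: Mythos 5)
Your proposal is correct and matches the paper's argument: both invoke Lemma \ref{lemma:avoid_D} to show the game $\game_{h_P,\bm{x},\bm{z}}$ is identically zero, and then conclude via homogeneity (Equation \ref{eq:homogenity} with $\alpha=0$) that the Shapley values vanish. Your alternative of plugging the zero game directly into Definition \ref{def:shapley} is an equally valid, trivially equivalent finish.
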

\begin{proof}
It follows from \textbf{Lemma \ref{lemma:avoid_D}} and the linearity property of Shapley values (cf. Equation \ref{eq:homogenity} with $\alpha=0$). Indeed, by definition, linear functions map zero to zero so the
Shapley values of the null game are null.
\end{proof}

\begin{figure}[t]
    \centering
    \resizebox{0.9\textwidth}{!}{
\begin{tikzpicture}
\def\step{2.95};
\foreach \x in {0,...,6}
{
	\node[scale=1.33,draw=black,shape=circle,fill=white!97!black] at (\step*\x,0) (\x) {\x};
}

\draw[color=green!70!black,-stealth, line width=1mm] (0) -- (1);
\draw[color=red,-stealth,line width=1mm] (1) -- (2);
\draw[color=blue,-stealth,line width=1mm] (2) -- (3);
\draw[color=green!70!black,-stealth,line width=1mm] (3) -- (4);
\draw[color=green!70!black,-stealth,line width=1mm] (4) -- (5);
\draw[color=red,-stealth,line width=1mm] (5) -- (6);

\node[yshift=1.35cm] at (0) 
{$\begin{aligned}
    i_0&=1\\ 
    \gamma_0&=-0.5\\ 
    \bm{x_1}&\bm{
    > \!-0.5?}
\end{aligned}$};

\node[yshift=1.35cm] at (1) 
{$\begin{aligned}
    i_1&=2\\ 
    \gamma_1&=1.5\\ 
    \bm{x_2}&\bm{
    > 1.5?}
\end{aligned}$};

\node[yshift=1.35cm] at (2) 
{$\begin{aligned}
    i_2&=1\\ 
    \gamma_2&=1\\ 
    \bm{x_1}&\bm{
    \leq 1?}
\end{aligned}$};

\node[yshift=1.35cm] at (3) 
{$\begin{aligned}
    i_3&=0\\ 
    \gamma_3&=-1\\ 
    \bm{x_0}&\bm{
    > \!-1?}
\end{aligned}$};

\node[yshift=1.35cm] at (4) 
{$\begin{aligned}
    i_4&=1\\ 
    \gamma_4&=-0.33\\ 
    \bm{x_1}&\bm{
    > \!-0.33?}
\end{aligned}$};

\node[yshift=1.35cm] at (5) 
{$\begin{aligned}
    i_5&=0\\ 
    \gamma_5&=-1.5\\ 
    \bm{x_0}&\bm{
    \leq -1.5?}
\end{aligned}$};

\end{tikzpicture}
}
    \caption{Example of sets $S_X=\{0,1\}$ and $S_Z=\{0,2\}$ for $\bm{x}=(0, 0, 1)^T$ and $\bm{z}=(-2, -1, 2)^T$.}
    \label{fig:SA_SB}
\end{figure}
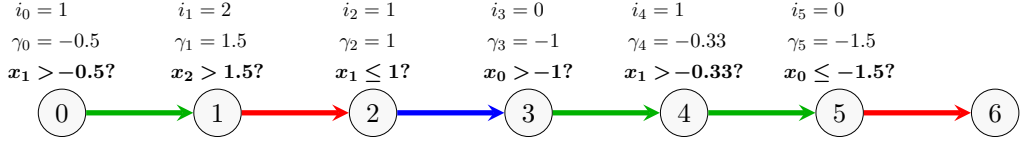

Now, assuming $P$ does not
contain any type \typeB edges, we have
\begin{equation}
\begin{aligned}
    h_P(\replace{S})&=v_l \prod_{e\in P}R_e(\replace{S}) \\
    &= v_l \prod_{
    \substack{e\in P\\
    e \text{ type \typeX}}}\!\!\!
    R_e(\replace{S}) \!\!\!
    \prod_{
    \substack{e\in P\\
    e \text{ type \typeZ}}}\!\!\!
    R_e(\replace{S})
    \underbrace{
    \prod_{\substack{e\in P\\
    e \text{ type \typeF}}}\!\!\!
    R_e(\replace{S})}_{=1\,\,\forall \,S}\\
    &= v_l \prod_{
    \substack{e\in P\\
    e \text{ type \typeX}}}\!\!\!
    \mathbbm{1}(i_e\in S)\!\!\!
    \prod_{
    \substack{e\in P\\
    e \text{ type \typeZ}}}\!\!\!
     \mathbbm{1}(i_e\notin S).
\end{aligned}
\label{eq:h_s_boolean}
\end{equation}
For an edge $e$, recall that $i_e$ denotes the feature index at its tail $e_0$.
The last line follows from the observation that, for an edge $e$ of type \typeX,
the only times $R_e(\replace{S})=1$ are when the $i_{e}$th component of
$\replace{S}$ is set to $\bm{x}$ and not $\bm{z}$ (\textit{i.e.} $i_{e}\in S$). Alternatively, for an edge $e$ of type \typeZ, $R_e(\replace{S})=1$ if and
only if the $i_{e}$th component of
$\replace{S}$ is set to $\bm{z}$ and not $\bm{x}$ (\textit{i.e.} $i_{e}\notin S$).

We define the three following sets 
\begin{equation}
    \begin{aligned}
    S_X:&=\{i_{e} : e\in P,  \,\,\text{and $e$ is of type $\typeX$}\},\\
    S_Z:&=\{i_{e} : e\in P, \,\,\text{and $e$ is of type $\typeZ$}\}.\\
    S_{XZ}&:=S_X\cup S_Z.
    \end{aligned}
    \label{eq:SX_SZ}
\end{equation}
See Figure \ref{fig:SA_SB} for an example of sets $S_X$ and $S_Z$ on some maximal path.
It is important to realize that the sets $S_X$, $S_Z$, and $S_{XZ}$ depend on the current maximal
path $P$. Still, we assume that $P$ is fixed and hence we
do not make the dependence explicit in the notation. 

\begin{lemma}
    If $ S_X \cap S_Z\neq \emptyset $, then 
    \begin{equation}
        \forall\, S\subseteq [d] \quad h_P(\,\replace{S}\,) = 0.
    \end{equation}    

    \label{lemma:disjoint}
\end{lemma}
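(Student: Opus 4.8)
The plan is to work directly from the product representation $h_P(\replace{S}) = v_l \prod_{e \in P} R_e(\replace{S})$ of Equation \ref{eq:h_p} and to exhibit, under the hypothesis $S_X \cap S_Z \neq \emptyset$, a pair of factors whose product vanishes for every $S$. The guiding idea is that a feature index lying in both $S_X$ and $S_Z$ is simultaneously required to be active (so that a $\typeX$ edge flows) and inactive (so that a $\typeZ$ edge flows), and these two demands are incompatible no matter how $S$ is chosen.

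Carrying this out, I would first use the hypothesis to fix an index $j \in S_X \cap S_Z$. By the definitions in Equation \ref{eq:SX_SZ}, this produces an edge $e \in P$ of type $\typeX$ with $i_e = j$ together with an edge $e' \in P$ of type $\typeZ$ with $i_{e'} = j$. I would then recall the per-edge identities already justified while deriving Equation \ref{eq:h_s_boolean}: since $R_e$ depends only on the $i_e$th coordinate of its input, a type $\typeX$ edge satisfies $R_e(\replace{S}) = \mathbbm{1}(i_e \in S)$, and a type $\typeZ$ edge satisfies $R_{e'}(\replace{S}) = \mathbbm{1}(i_{e'} \notin S)$. Because $e$ and $e'$ both occur in $P$, the product defining $h_P(\replace{S})$ contains both factors, so it carries the factor $\mathbbm{1}(j \in S)\,\mathbbm{1}(j \notin S)$, which equals $0$ for every $S \subseteq [d]$. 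Hence $h_P(\replace{S}) = 0$, as claimed.

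I do not expect a genuine obstacle here; the lemma is essentially an observation about contradictory membership constraints. The only point requiring care is that the two per-edge identities hold edge-by-edge, independently of the remaining edges of $P$, so no case distinction on whether $P$ contains a type $\typeB$ edge is needed — such an edge, if present, would merely force $h_P(\replace{S}) = 0$ on its own through Lemma \ref{lemma:avoid_D}, which is consistent with and subsumed by the desired conclusion. I would therefore present the argument uniformly over all maximal paths, without restricting to the type-$\typeB$-free case that was used to simplify Equation \ref{eq:h_s_boolean}.
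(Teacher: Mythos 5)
Your proof is correct and takes essentially the same approach as the paper: fix $j\in S_X\cap S_Z$, produce an edge of type \typeX and an edge of type \typeZ both carrying feature index $j$, and observe that the product defining $h_P(\,\replace{S}\,)$ then contains the vanishing factor $\mathbbm{1}(j\in S)\,\mathbbm{1}(j\notin S)$ for every $S\subseteq[d]$. Your only deviation is a minor tightening: by applying the per-edge identities directly you avoid the type-\typeB-free assumption under which Equation \ref{eq:h_s_boolean} was derived, whereas the paper simply cites that equation (the case of a type \typeB edge being harmlessly covered by Lemma \ref{lemma:avoid_D} in any event).
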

\begin{proof}
Let $j$ be a feature common to both $S_X$ and $S_Z$. Then, there must exists
an edge $e$ of \typeX and an edge $e'$ of type \typeZ such that $j=i_e=i_{e'}$. From Equation \ref{eq:h_s_boolean}, we have
$$
h_P(\,\replace{S}\,)\propto \mathbbm{1}(j\in S)\mathbbm{1}(j\notin S)=0.$$


\end{proof}
\vspace{-0.85cm}
\begin{corollary}
    If $S_X\cap S_Z \neq \emptyset$, then $        \bm{\phi}(h_P, \bm{x},\bm{z}) = \bm{0}.$
    \label{corollary:shap_disjoint}
\end{corollary}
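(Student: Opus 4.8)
The plan is to recognize that this corollary stands in exactly the same relation to Lemma \ref{lemma:disjoint} as Corollary \ref{corollary:shap_avoid_D} does to Lemma \ref{lemma:avoid_D}; the argument is therefore a verbatim reuse of that earlier reasoning. First I would invoke Lemma \ref{lemma:disjoint}: under the hypothesis $S_X\cap S_Z\neq\emptyset$, we have $h_P(\,\replace{S}\,)=0$ for every $S\subseteq[d]$. By the definition of the Baseline Interventional Game (cf.\ \textbf{Definition \ref{def:explain_game}}), this says precisely that the associated game $\game_{h_P,\bm{x},\bm{z}}$ is the null game, \textit{i.e.}\ the constant-zero function on $2^{[d]}$.

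Next I would appeal to the homogeneity half of the linearity property (Equation \ref{eq:homogenity}) with $\alpha=0$. Since the null game equals $0\cdot\mu$ for any game $\mu:2^{[d]}\rightarrow\R$, homogeneity forces $\bm{\phi}(0\cdot\mu)=0\cdot\bm{\phi}(\mu)=\bm{0}$. In other words, any map obeying the Shapley axioms sends the null game to the zero vector, and hence $\bm{\phi}(h_P,\bm{x},\bm{z})=\bm{0}$, which is the claim.

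I do not anticipate any genuine obstacle: the entire content is already packaged in Lemma \ref{lemma:disjoint}, and the passage from ``the game vanishes identically'' to ``its Shapley values vanish'' is the same one-line linearity argument used before. The only point worth stating with care is \emph{why} an identically zero game is annihilated by $\bm{\phi}$ --- namely that this follows from homogeneity at $\alpha=0$ rather than requiring the full additivity axiom --- which is exactly the remark made in the proof of Corollary \ref{corollary:shap_avoid_D}.
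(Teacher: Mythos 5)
Your proposal is correct and follows exactly the paper's own route: it invokes Lemma \ref{lemma:disjoint} to see that the game $\game_{h_P,\bm{x},\bm{z}}$ is identically zero, then applies homogeneity (Equation \ref{eq:homogenity} with $\alpha=0$) to conclude $\bm{\phi}(h_P,\bm{x},\bm{z})=\bm{0}$. Your explicit remark on \emph{why} the null game has null Shapley values is precisely the observation the paper records in its proof of Corollary \ref{corollary:shap_avoid_D} and silently reuses here.
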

\begin{proof}
 By \textbf{Lemma \ref{lemma:disjoint}} and Linearity of the Shapley values (cf. Equation \ref{eq:homogenity}).
 \end{proof}

We observe that \textbf{Corollary \ref{corollary:shap_avoid_D} \& \ref{corollary:shap_disjoint}} provide sufficient
conditions for the contributions at line
3 of Algorithm \ref{alg:simplest} to be null. We now provide a result that highlights which features are dummies of the Coalitional Game for the maximal path $P$.

\begin{lemma}
    If a feature does not belong to $S_{XZ}$, then it is a dummy feature for the Baseline Interventional Game $\game_{h_P, \bm{x}, \bm{z}}$ (cf. Definition \ref{def:explain_game}).
    \label{lemma:SAB_dummy}
\end{lemma}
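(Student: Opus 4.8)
The plan is to verify the dummy condition of Equation \ref{eq:dummy} directly: for any $i \notin S_{XZ}$ and any $S \subseteq [d] \setminus \{i\}$ I must show $\game_{h_P, \bm{x}, \bm{z}}(S \cup \{i\}) = \game_{h_P, \bm{x}, \bm{z}}(S)$, that is, $h_P(\replace{S \cup \{i\}}) = h_P(\replace{S})$. First I would dispose of the two degenerate situations in which the game vanishes identically. If $P$ contains an edge of type $\typeB$, then Lemma \ref{lemma:avoid_D} gives $h_P(\replace{S}) = 0$ for every $S$; likewise if $S_X \cap S_Z \neq \emptyset$, then Lemma \ref{lemma:disjoint} gives $h_P(\replace{S}) = 0$ for every $S$. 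In either case $\game_{h_P,\bm{x},\bm{z}}$ is the null game, so every feature---in particular every $i \notin S_{XZ}$---is trivially a dummy.

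It remains to treat the case where $P$ has no edge of type $\typeB$ and $S_X \cap S_Z = \emptyset$, which is precisely the regime in which the product formula of Equation \ref{eq:h_s_boolean} applies. The key observation is that the right-hand side of that formula depends on $S$ only through the indicators $\mathbbm{1}(i_e \in S)$ ranging over type $\typeX$ edges and $\mathbbm{1}(i_e \notin S)$ ranging over type $\typeZ$ edges. By the definition of $S_X$ and $S_Z$ in Equation \ref{eq:SX_SZ}, the feature indices $i_e$ that occur are exactly the elements of $S_X \cup S_Z = S_{XZ}$. Consequently $h_P(\replace{S})$ is a function of $S \cap S_{XZ}$ alone.

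To finish, I would fix $i \notin S_{XZ}$ and an arbitrary $S \subseteq [d] \setminus \{i\}$. Since $i \notin S_X$ and $i \notin S_Z$, we have $i_e \neq i$ for every type $\typeX$ or type $\typeZ$ edge $e$ of $P$; hence $\mathbbm{1}(i_e \in S \cup \{i\}) = \mathbbm{1}(i_e \in S)$ and $\mathbbm{1}(i_e \notin S \cup \{i\}) = \mathbbm{1}(i_e \notin S)$ for all such edges. Every factor in Equation \ref{eq:h_s_boolean} is therefore unchanged when $i$ is added to $S$, giving $h_P(\replace{S \cup \{i\}}) = h_P(\replace{S})$, which is exactly the dummy condition.

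I do not expect a genuine obstacle here: the argument is a direct reading of Equation \ref{eq:h_s_boolean}. The one point requiring care is the completeness of the case split, since Equation \ref{eq:h_s_boolean} is only valid once the type $\typeB$ and $S_X \cap S_Z \neq \emptyset$ cases have been excluded; handling those first via Lemmas \ref{lemma:avoid_D} and \ref{lemma:disjoint} is what keeps the final computation clean.
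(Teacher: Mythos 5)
Your proposal is correct and takes essentially the same route as the paper: handle the type \typeB case as the null game via Lemma \ref{lemma:avoid_D}, then read off from Equation \ref{eq:h_s_boolean} that $i\notin S_{XZ}$ forces $i_e\neq i$ for every type \typeX or \typeZ edge, so every indicator factor is unchanged when $i$ is adjoined to $S$. One minor remark: your separate case for $S_X\cap S_Z\neq\emptyset$ is harmless but not needed, since the derivation of Equation \ref{eq:h_s_boolean} uses only the absence of type \typeB edges and not the disjointness of $S_X$ and $S_Z$ (if they intersect, the formula simply evaluates to zero for all $S$), which is why the paper's proof does not split on that case.
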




\begin{proof}
If we assume there exists a type \typeB edge in $P$, then by 
\textbf{Lemma \ref{lemma:avoid_D}}, the game is null and all players are dummies.
In the non-trivial case where $P$ does not contain a type \typeB edge, we must let $i\notin S_{XZ}$ be an index, and
$S\subseteq [d]\setminus\{i\}$ be an arbitrary subset of players that excludes $i$,
and show that $h_P(\,\replace{S\cup\{i\}}\,) = h_P(\,\replace{S}\,)$. Since
$i \notin S_{XZ}$, there are no edges $e$ of type \typeX or \typeZ such that
$i_{e}=i$. Moreover, since $i\neq i_{e}$ implies the equivalence
$i_{e}\in S\iff i_{e}\in S\cup\{i\}$, we have
\begin{align*}
    h_P(\replace{S})
    &= v_l \prod_{
    \substack{e\in P\\
    e \text{ type \typeX}}}\!\!\!
    \mathbbm{1}(i_e\in S)\!\!\!
    \prod_{
    \substack{e\in P
    e \text{ type \typeZ}}}\!\!\!
     \mathbbm{1}(i_e\notin S)\\
     &= v_l \prod_{
    \substack{e\in P\\
    e \text{ type \typeX}}}\!\!\!
    \mathbbm{1}(i_e\in S\cup\{i\})\!\!\!
    \prod_{
    \substack{e\in P\\
    e \text{ type \typeZ}}}\!\!\!
     \mathbbm{1}(i_e\notin S\cup\{i\})\\
     &=h_P(\replace{S\cup\{i\}}),
\end{align*}


which concludes the proof.
\end{proof}

By \textbf{Lemma \ref{lemma:dummy}}, we note that computing the Shapley values $\bm{\phi}(h_P,\bm{x},\bm{z})$ will only require evaluating the
game $\nu_{h,\bm{x},\bm{z}}(S):=h_P(\,\replace{S}\,)$ for coalitions $S\subseteq S_{XZ}$. Additionally, we will assume w.l.o.g that the current maximal path contains no type \typeB edges and that $S_X$ and $S_Z$ are disjoint. Indeed, failing to reach these requirements will simply cause the Shapley values to be zero.

\begin{lemma}[Flow Blocking]
If $P$ contains no type \typeB edges and the sets $S_X$ and $S_Z$ are disjoint, then for any $S\subseteq S_{XZ}$ we have
\begin{equation}
    h_P(\,\replace{S}\,) = 
    \begin{cases}
       v_l &\quad\text{if}\,\, S=S_X\\
       0 &\quad\text{otherwise,} 
     \end{cases}
\end{equation}
where $l\in L$ is the leaf node at the end of $P$.
\label{lemma:flow_block}

\end{lemma}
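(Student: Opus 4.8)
The plan is to start directly from the product formula for $h_P(\replace{S})$ established in Equation \ref{eq:h_s_boolean}. Since we assume $P$ contains no type $\typeB$ edges, that equation reads $h_P(\replace{S}) = v_l \prod_{e \text{ type } \typeX} \mathbbm{1}(i_e \in S) \prod_{e \text{ type } \typeZ} \mathbbm{1}(i_e \notin S)$, where the products range over the edges of $P$. The entire right-hand side collapses to either $v_l$ or $0$ according to whether the product of indicators is $1$ or $0$; hence the lemma reduces to the purely combinatorial claim that, for $S \subseteq S_{XZ}$, this product equals $1$ if and only if $S = S_X$.

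First I would rewrite the two indicator products in set language. The factor coming from type $\typeX$ edges equals $1$ exactly when $i_e \in S$ for every such $e$, i.e. when $S_X \subseteq S$; similarly the factor from type $\typeZ$ edges equals $1$ exactly when $i_e \notin S$ for every such $e$, i.e. when $S \cap S_Z = \emptyset$. Therefore $h_P(\replace{S}) = v_l$ precisely when both $S_X \subseteq S$ and $S \cap S_Z = \emptyset$ hold, and $h_P(\replace{S}) = 0$ otherwise.

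It then remains to show that, under the standing hypotheses $S \subseteq S_{XZ} = S_X \cup S_Z$ and $S_X \cap S_Z = \emptyset$, the conjunction ``$S_X \subseteq S$ and $S \cap S_Z = \emptyset$'' is equivalent to $S = S_X$. The backward direction is immediate: $S = S_X$ gives $S_X \subseteq S$ trivially and $S \cap S_Z = S_X \cap S_Z = \emptyset$ by disjointness. For the forward direction, from $S \subseteq S_X \cup S_Z$ together with $S \cap S_Z = \emptyset$ I would deduce $S \subseteq S_X$, which combined with $S_X \subseteq S$ yields $S = S_X$. Substituting back into the two cases establishes the claim.

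The step requiring the most care—rather than being genuinely hard—is the ``otherwise'' direction, where one must confirm that every admissible $S \neq S_X$ kills at least one indicator. It is exactly the two standing hypotheses that make this work: the containment $S \subseteq S_{XZ}$ forces any element of $S$ lying beyond $S_X$ to belong to $S_Z$ (triggering a vanishing type $\typeZ$ factor), while disjointness guarantees that the canonical choice $S = S_X$ does not itself intersect $S_Z$. I would also note that the conclusion holds verbatim when $v_l = 0$, in which case both branches return $0$ consistently, so no separate argument is needed for that degenerate case.
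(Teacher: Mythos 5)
Your proposal is correct and takes essentially the same route as the paper: both arguments start from Equation~\ref{eq:h_s_boolean} and reduce the lemma to showing that the product of indicators is nonzero for $S\subseteq S_{XZ}$ exactly when $S=S_X$. Your set-theoretic reformulation (the product is $1$ iff $S_X\subseteq S$ and $S\cap S_Z=\emptyset$) is just a repackaging of the paper's two-case analysis (a witness $j\in S_X\setminus S$ killing a type \typeX factor, or $j\in S\setminus S_X\subseteq S_Z$ killing a type \typeZ factor), with the same use of the hypotheses $S\subseteq S_{XZ}$ and $S_X\cap S_Z=\emptyset$.
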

\begin{proof}
Since $P$ does not contain a type \typeB edge, we can use Equation
\ref{eq:h_s_boolean}, which, as a reminder, states that
\begin{equation}
\label{eq:h_s_boolean_reprise}
    h_P(\replace{S})
    = v_l \prod_{
    \substack{e\in P\\
    e \text{ type \typeX}}}\!\!\!
    \mathbbm{1}(i_e\in S)\!\!\!
    \prod_{
    \substack{e\in P\\
    e \text{ type \typeZ}}}\!\!\!
     \mathbbm{1}(i_e\notin S).
\end{equation}

We need to prove the following two statements.
\begin{enumerate}
    \item $h_P(\replace{S_X})=v_l$. On the one hand, for every edge $e$ of type \typeX in $P$, by definition $i_e\in S_X$. On the other hand, for every edge $e$ of type \typeZ in $P$, by definition $i_e\in S_Z$, so $i_e\notin S_X$ since we assume that $S_X\cap S_Z=\emptyset$. Therefore it follows that $h_P(\replace{S_X})=v_l$ using Equation~\ref{eq:h_s_boolean_reprise}.
    \item If $S\subseteq S_{XZ}$ and $S\neq S_X$, then $h_P(\replace{S})=0$. Assume that $S\subseteq S_{XZ}$ and $S\neq S_X$. Since, $S\neq S_X$, at least of the
    two following cases must occur, (a) there exists $j\in S_X\setminus S$ or (b) there exists $j\in S \setminus S_X$.
    \begin{enumerate}
\item Let $j\in S_X\setminus S$. We can find $e\in P$ of type \typeX such that $i_e=j$. We then have $\mathbbm{1}(i_e\in S)=0$.
\item Let $j\in S\setminus S_X$. Since $S\subseteq S_{XZ}$, it follows that $j\in S_Z$. We can find $e\in P$ of type \typeZ such that $i_e=j$. Then as $j\in S$, we have $\mathbbm{1}(i_e\notin S)=0$.
\end{enumerate}
Given that either case (a) or case (b) must occur, 
using Equation~\ref{eq:h_s_boolean_reprise} we see 
that $h_P(\replace{S})=0$, as desired.
\end{enumerate}

\end{proof}

We now arrive at the main Theorem which highlights how Interventional TreeSHAP computes
Shapley values while avoiding the exponential cost w.r.t the
number of input features. This Theorem is the culmination of all previous results : \textbf{Lemma \ref{lemma:dummy}}, \textbf{Corollary \ref{corollary:shap_avoid_D} \& \ref{corollary:shap_disjoint}} and \textbf{Lemma \ref{lemma:SAB_dummy} \& \ref{lemma:flow_block}}.

\begin{theorem}[Complexity Reduction]
If $P$ contains no type \typeB edges and the sets $S_X$ and $S_Z$ are disjoint, then all features that are not in $S_{XZ}$ are dummies and we get
\begin{equation}
    \phi_i(h_P, \bm{x},\bm{z}) = \sum_{S\subseteq S_{XZ}\setminus \{i\}}W(|S|, |S_{XZ}|)\big(h_P(\,\replace{S\cup\{i\}}\,)-
    h_P(\,\replace{S}\,)\big)\quad i\in S_{XZ}.
    \label{eq:phi_final}
\end{equation}

The exponential cost $\mathcal{O}(2^{|S_{XZ}|})$ of computing these terms reduces to $\mathcal{O}(1)$ following
\begin{align}
    i \notin S_{XZ} &\Rightarrow \phi_i(h_P, \bm{x},\bm{z}) = 0  \label{final_dumb}\\ 
    i \in S_X &\Rightarrow \phi_i(h_P, \bm{x},\bm{z}) = W(|S_X|-1, |S_{XZ}|)v_l \label{final_S_X} \\
    i \in S_Z &\Rightarrow \phi_i(h_P, \bm{x},\bm{z}) = -W(|S_X|, |S_{XZ}|)v_l, \label{final_S_Z}
\end{align}
given that the coefficients $W$
where computed and stored in advance.
\label{thm:complexity_reduction}
\end{theorem}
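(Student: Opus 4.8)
The plan is to assemble the closed-form expressions directly from the two structural results already established for a fixed maximal path: the Dummy Reduction (\textbf{Lemma \ref{lemma:dummy}}) combined with \textbf{Lemma \ref{lemma:SAB_dummy}}, and the Flow Blocking identity (\textbf{Lemma \ref{lemma:flow_block}}). First, since $P$ contains no type \typeB edge, \textbf{Lemma \ref{lemma:SAB_dummy}} tells us that every feature outside $S_{XZ}$ is a dummy for the game $\game_{h_P,\bm{x},\bm{z}}$. Feeding this into \textbf{Lemma \ref{lemma:dummy}} with $D^C = S_{XZ}$ immediately yields both Equation \ref{final_dumb} (dummies receive zero) and the reduced Shapley sum of Equation \ref{eq:phi_final}, in which the coalitions range only over subsets of $S_{XZ}$. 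This collapses the exponential sum over $2^{[d]}$ to one over $2^{|S_{XZ}|}$ and disposes of the first claim of the theorem.

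The second step is to evaluate Equation \ref{eq:phi_final} by substituting the Flow Blocking Lemma, which asserts that for $S\subseteq S_{XZ}$ the value $h_P(\,\replace{S}\,)$ equals $v_l$ when $S=S_X$ and vanishes otherwise. The consequence is that, for each fixed $i$, the sum in Equation \ref{eq:phi_final} contains at most one surviving term, so the whole computation reduces to identifying which single coalition $S$ makes either $h_P(\,\replace{S\cup\{i\}}\,)$ or $h_P(\,\replace{S}\,)$ nonzero. For $i\in S_X$, the added term $h_P(\,\replace{S\cup\{i\}}\,)$ is nonzero exactly when $S\cup\{i\}=S_X$, i.e. $S=S_X\setminus\{i\}$, while the subtracted term can never fire because $S\subseteq S_{XZ}\setminus\{i\}$ forbids $S=S_X$ (as $i\in S_X$); since $|S_X\setminus\{i\}| = |S_X|-1$, this leaves precisely $W(|S_X|-1,|S_{XZ}|)\,v_l$, which is Equation \ref{final_S_X}. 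Symmetrically, for $i\in S_Z$ the disjointness of $S_X$ and $S_Z$ gives $i\notin S_X$, so the added term never fires while the subtracted term $h_P(\,\replace{S}\,)$ is nonzero exactly when $S=S_X$ (admissible since $i\notin S_X$), yielding $-W(|S_X|,|S_{XZ}|)\,v_l$ and hence Equation \ref{final_S_Z}.

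Since each of these Shapley values is obtained by reading off a single precomputed coefficient $W$, the per-feature cost is $\mathcal{O}(1)$, establishing the advertised complexity reduction. I expect no serious analytic obstacle here: every nontrivial ingredient has been isolated into the preceding lemmas, so the remaining effort is purely combinatorial bookkeeping. The one place to exercise care is the asymmetry between the two cases, since for $i\in S_X$ the surviving contribution comes from the positive (added) term with coalition $S_X\setminus\{i\}$, whereas for $i\in S_Z$ it comes from the negative (subtracted) term with coalition $S_X$; conflating these, or miscounting the cardinality $|S_X\setminus\{i\}|$ against $|S_X|$, is the only way the argument could slip.
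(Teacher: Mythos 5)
Your proof is correct and follows essentially the same route as the paper's: apply \textbf{Lemma \ref{lemma:SAB_dummy}} together with the Dummy Reduction of \textbf{Lemma \ref{lemma:dummy}} to obtain Equations \ref{eq:phi_final} and \ref{final_dumb}, then use the Flow Blocking \textbf{Lemma \ref{lemma:flow_block}} to isolate the single surviving coalition ($S=S_X\setminus\{i\}$ when $i\in S_X$, and $S=S_X$ when $i\in S_Z$). If anything, your explicit verification that the competing term cannot fire in each case (e.g.\ that $S=S_X$ is inadmissible when $i\in S_X$, and that $S\cup\{i\}\neq S_X$ when $i\in S_Z$ by disjointness) is slightly more careful than the paper's writeup, which asserts uniqueness of the surviving coalition and only checks its value.
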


\begin{proof}
Given \textbf{Lemma \ref{lemma:SAB_dummy}}, we know that all features not in $S_{XZ}$ are dummies. Hence applying \textbf{Lemma \ref{lemma:dummy}} yields Equation \ref{eq:phi_final} and \ref{final_dumb}. 
We now prove \ref{final_S_X} and \ref{final_S_Z} separately.

Suppose $i\in S_X$, then according to \textbf{Lemma \ref{lemma:flow_block}}
the only coalition $S\subseteq S_{XZ}\setminus \{i\}$ for which $h_P(\,\replace{S\cup\{i\}}\,)-
h_P(\,\replace{S}\,)$ is non-null is when $S=S_X\setminus\{i\}$. Indeed, when $S=S_X\setminus\{i\}$
\begin{align*}
h_P(\,\replace{S\cup\{i\}}\,)-
h_P(\,\replace{S}\,)&=h_P(\,\replace{S_X}\,)-
h_P(\,\replace{S_X\setminus\{i\}}\,)\\
&=v_l-0=v_l.
\end{align*} 
So, we get a Shapley value $W(|S|, |S_{XZ}|)v_l =W(|S_X\setminus\{i\}|, |S_{XZ}|)v_l=W(|S_X|-1, |S_{XZ}|)v_l$.

Now suppose $i\in S_Z$, according to \textbf{Lemma \ref{lemma:flow_block}}
the only coalition $S\subseteq S_{XZ}\setminus \{i\}$ for which $h_P(\,\replace{S\cup\{i\}}\,)-
h_P(\,\replace{S}\,)$ is non-null is when $S=S_X$ Indeed, 
when $S=S_X$
\begin{align*}
h_P(\,\replace{S\cup\{i\}}\,)-
h_P(\,\replace{S}\,)&=
h_P(\,\replace{S_X\cup \{i\}}\,)-
h_P(\,\replace{S_X}\,)\\
&=0-v_l=-v_l.
\end{align*} 
We therefore get a Shapley value $W(|S|, |S_{XZ}|)v_l =-W(|S_X|, |S_{XZ}|)v_l$.

\end{proof}

The complexity reduction Theorem is the main reason Interventional TreeSHAP works so efficiently. Indeed, the exponential complexity of Shapley values w.r.t the number of input features used to severely limit their application to high-dimensional machine learning tasks. Interventional TreeSHAP gets rid of this exponential complexity by noticing that decision stumps $h_P$ are \texttt{AND} functions. This drastically reduces the number of coalitions $S\subseteq S_{XZ}\setminus\{i\}$ to which adding player $i$ changes the output of the model.

\subsection{Efficient Tree Traversal}

In this section, we leverage theoretical results from the previous section to improve the dynamic tree traversal
in \textbf{Algorithm \ref{alg:simplest}}.
First, by virtue of \textbf{Theorem \ref{thm:complexity_reduction}}, given the path $P$ between the root and the current node in the traversal, we only need to keep track of the sets $S_X$ and $S_Z$.
Indeed once we reach a leaf (and $P$ becomes a maximal path), we will 
only need these two sets of features
to compute the Shapley values. 

Now that we know the two data structures to track
during the tree traversal, we are left with optimizing the tree traversal itself.
\textbf{Corollary \ref{corollary:shap_avoid_D} \& \ref{corollary:shap_disjoint}} give us sufficient conditions
for when the contributions of line 3 of \textbf{Algorithm \ref{alg:simplest}} are null. 
Hence, we must design the tree traversal
in a way that avoids reaching maximal paths $P$
whose contributions are guaranteed to be zero.

Firstly, from \textbf{Corollary \ref{corollary:shap_avoid_D}}, 
we should avoid going down type \typeB edges during 
the traversal. This is because any maximal path $P$ that 
follows this edge will contain a type $\typeB$ edge and 
hence have null Shapley values. Therefore, if during the 
tree traversal we encounter a split where both $\bm{x}$ and 
$\bm{z}$ flow through the same edge, we only need to follow 
that edge of type \typeF and the sets $S_X$ and $S_Z$ remain 
unchanged, see Figure \ref{fig:scenario_1_2}(a). We call 
this the \textbf{Scenario 1}.

Secondly, \textbf{Corollary \ref{corollary:shap_disjoint}} 
informs us that, as we traverse the tree, we must only 
follow paths where the sets $S_X$ and $S_Z$ are disjoint. 
If we follow paths
where the sets are not disjoint, then any maximal path we 
reach will have zero Shapley values resulting in wasted 
computations. Therefore, if a node $n$ is encountered such 
that $\bm{x}$ and $\bm{z}$ go different ways, and the 
associated feature $i_n$ is already in $S_{XZ}$, then
\begin{enumerate}
    \item if $i_n\in S_X$, go down the same direction as $\bm{x}$
    \item if $i_n\in S_Z$, go down the same direction as $\bm{z}$.
\end{enumerate}
This is necessary to keep the sets $S_X$ and $S_Z$ disjoint. 
These two cases are illustrated in Figure 
\ref{fig:scenario_1_2}(b) and (c) and are referred to as 
\textbf{Scenario 2}.

\begin{figure}
    \centering
    \begin{subfigure}[b]{0.32\textwidth}
        \centering
        \resizebox{5.2cm}{!}{
        \begin{tikzpicture}
\tikzset{nodestyle/.style={scale=1.33,draw=black,shape=circle,fill=white!97!black}}
\tikzset{edgetyle/.style={-stealth, line width=1mm}}

\def\step{1.5};

\node[nodestyle] at (0,0) (1) {$n$};
\draw[edgetyle,opacity=0.2] (0,1.5*\step) node[anchor=south] {...} -- (1);
\draw[edgetyle] (1) -- (-1*\step,-1*\step) node[anchor=north] {...};
\draw[edgetyle,color=blue] (1) -- (\step, -1*\step)node[anchor=north] {...};


\node at (0.5,3) {$(S_X, S_Z)$};
\node at (2.7,-1.5) {$(S_X, S_Z)$};

\draw[dashed,-stealth,line width=0.25mm]  
plot[smooth, tension=.7] coordinates {(0.5,2.75) (0.65,0) (2,-1.25)};
\end{tikzpicture}
        }
        \caption{\hypertarget{scenario_1}
        {\textbf{Scenario 1}}: Ignore the type 
        \typeB edge and follow the type \typeF 
        edge without storing $i_n$.}
    \end{subfigure}
    \hfill
    \begin{subfigure}[b]{0.32\textwidth}
        \centering
        \resizebox{5.2cm}{!}{
        \begin{tikzpicture}
\tikzset{nodestyle/.style={scale=1.33,draw=black,shape=circle,fill=white!97!black}}
\tikzset{edgetyle/.style={-stealth, line width=1mm}}

\def\step{1.5};

\node[nodestyle] at (0,0) (1) {$n$};
\draw[edgetyle,opacity=0.2] (0,1.5*\step) node[anchor=south] {...} -- (1);
\draw[edgetyle, color=red] (1) -- (-1*\step,-1*\step) node[anchor=north] {...};
\draw[edgetyle,color=green!70!black] (1) -- (\step, -1*\step)node[anchor=north] {...};


\node at (0.5,3) {$(S_X, S_Z)$};
\node at (2.7,-1.5) {$(S_X, S_Z)$};

\draw[dashed,-stealth,line width=0.25mm]  
plot[smooth, tension=.7] coordinates {(0.5,2.75) (0.65,0) (2,-1.25)};
\node at (1.5,0.5) {$i_n\in S_X$};
\end{tikzpicture}
        }
        \caption{\hypertarget{scenario_2}
        {\textbf{Scenario 2}}: Feature seen before 
        in $S_X$ so go down the direction of $\bm{x}$.}
    \end{subfigure}
    \hfill
    \begin{subfigure}[b]{0.32\textwidth}
        \centering
        \resizebox{5.2cm}{!}{
        \begin{tikzpicture}
\tikzset{nodestyle/.style={scale=1.33,draw=black,shape=circle,fill=white!97!black}}
\tikzset{edgetyle/.style={-stealth, line width=1mm}}

\def\step{1.5};

\node[nodestyle] at (0,0) (1) {$n$};
\draw[edgetyle,opacity=0.2] (0,1.5*\step) node[anchor=south] {...} -- (1);
\draw[edgetyle, color=red] (1) -- (-1*\step,-1*\step) node[anchor=north] {...};
\draw[edgetyle,color=green!70!black] (1) -- (\step, -1*\step)node[anchor=north] {...};


\node at (-0.5,3) {$(S_X, S_Z)$};
\node at (-2.7,-1.5) {$(S_X, S_Z)$};

\draw[dashed,-stealth,line width=0.25mm]  
plot[smooth, tension=.7] coordinates {(-0.5,2.75) (-0.65,0) (-2,-1.25)};
\node at (-1.5,0.5) {$i_n\in S_Z$};
\end{tikzpicture}
        }
        \caption{\textbf{Scenario 2}: 
        Feature seen before in $S_Z$ so go 
        down the direction of $\bm{z}$.}
    \end{subfigure}
    \caption{}
    \label{fig:scenario_1_2}
\end{figure}

\begin{figure}[t]
    \centering
    {\begin{tikzpicture}
\tikzset{nodestyle/.style={scale=1.33,draw=black,shape=circle,fill=white!97!black}}
\tikzset{edgetyle/.style={-stealth, line width=1mm}}

\def\step{1.5};

\node[nodestyle] at (0,0) (1) {$n$};
\draw[edgetyle,opacity=0.2] (0,1.5*\step) node[anchor=south] {...} -- (1);
\draw[edgetyle, color=red] (1) -- (-1*\step,-1*\step) node[anchor=north] {...};
\draw[edgetyle,color=green!70!black] (1) -- (\step, -1*\step)node[anchor=north] {...};


\node at (0,3) {$(S_X, S_Z)$};
\node at (-3.5,-1.5) {$(S_X, S_Z\cup \{i_n\})$};
\node at (3.5,-1.5) {$(S_X\cup \{i_n\}, S_Z)$};
\node at (1.6,0.5) {$i_n\notin S_X$};
\node at (-1.6,0.5) {$i_n\notin S_Z$};

\draw[dashed,-stealth,line width=0.25mm]  
plot[smooth, tension=.7] coordinates {(-0.5,2.75) (-0.65,0) (-2,-1.25)};
\draw[dashed,-stealth,line width=0.25mm]  
plot[smooth, tension=.7] coordinates {(0.5,2.75) (0.65,0) (2,-1.25)};

\end{tikzpicture}}
    \caption{\textbf{Scenario 3}: 
    Feature never seen 
    before in $S_X$ and $S_Z$ so go down both 
    directions and update $S_X$ and $S_Z$ 
    accordingly.}    
    \label{fig:scenario_3}
\end{figure}
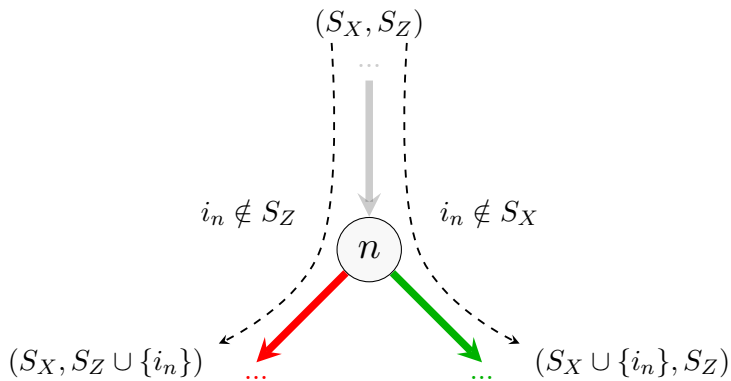

The final scenario that can occur when reaching a node is when $\bm{x}$ and $\bm{z}$ do not go the same way and the feature $i_n$ is not already in $S_{XZ}$. In that case. we must visit both branches and update the sets $S_X, S_Z$ accordingly, see Figure \ref{fig:scenario_3}. We refer to this as the \textbf{Scenario 3}.
We have now identified every scenario that can occur during the tree traversal, as well as the two data structures $S_X, S_Z$ that we must store during the exploration.
\textbf{Algorithm \ref{alg:tree_shap}} presents the final pseudo-code for Interventional TreeSHAP based on \textbf{Theorem \ref{thm:complexity_reduction}} and
our insights on how to efficiently traverse the tree.
The complexity of this algorithm depends on the data structure used to represent the sets $S_X, S_Z$. Interventional TreeSHAP must iterate through elements of $S_X$ and $S_Z$ in line 3. Moreover, on lines 4, 12, and 13, we check whether or not an index $i_n$ belongs to one of these sets. If we encode the sets as binary vectors and keep track of their cardinality, the amount of computations per internal node is $\mathcal{O}(1)$ and is
$\mathcal{O}(d)$ for terminal nodes (because of the for-loop at line 3). Hence, letting $I$ be the set of internal nodes, we get a global complexity of $\mathcal{O}(|I| + |L|d)$. This is already a large improvement compared to the exponential complexity of classical Shapley values (assuming we have a tree whose number of nodes scales desirably w.r.t $d$).

Still, the current implementation of 
Interventional TreeSHAP leverages additional computation
improvements to reduce the overall complexity to $\mathcal{O}(|I|) +\mathcal{O}(|L|)=\mathcal{O}(|N|)$. The intuition
is that we can avoid the for-loop at line 3 by propagating the contributions at lines
5 and 7 up in the tree. However, we have found that this additional optimization does not
generalize to other game theory indices, such as the Shapley-Taylor index presented
in the coming section. Hence, we prefer to see \textbf{Algorithm \ref{alg:tree_shap}}
as the most general/versatile formulation of Interventional TreeSHAP.

\begin{algorithm}[h]
\caption{Interventional Tree SHAP}
\begin{algorithmic}[1]
\Procedure{Recurse}{$n$, $S_X$, $S_Z$}
    \If{$n\in L$}
        \For{$i\in S_X\cup S_Z$}\Comment{cf. Theorem \ref{thm:complexity_reduction}}
            \If{$i \in S_X$}
                \State $\phi_i \pluseq W(|S_X|-1, |S_{XZ}|)v_n$;
            \Else
                \State $\phi_i \minuseq W(|S_X|, |S_{XZ}|)v_n$;
            \EndIf
        \EndFor
    \ElsIf{$\bm{x}_\text{child}=\bm{z}_\text{child}$}\Comment{\hyperlink{scenario_1}{Scenario 1}}
        \State \Return \Call{Recurse}{$\bm{x}_\text{child}, S_X, S_Z$};
    \ElsIf{$i_n\in S_X\cup S_Z$}\Comment{\hyperlink{scenario_2}{Scenario 2}}
        \If{$i_n \in S_X$}
            \State \Return \Call{Recurse}{$\bm{x}_\text{child}, S_X, S_Z$};
        \Else
            \State \Return \Call{Recurse}{$\bm{z}_\text{child}, S_X, S_Z$};
        \EndIf
    \Else\Comment{Scenario 3}
        \State \Call{Recurse}{$\bm{x}_\text{child}, S_X\cup\{i_n\}, S_Z$};
        \State \Call{Recurse}{$\bm{z}_\text{child}, S_X, S_Z\cup\{i_n\}$};
    \EndIf
\EndProcedure
\State $\bm{\phi}=\bm{0}$;
\State \Call{Recurse}{0, $S_X=\emptyset$, $S_Z=\emptyset$};
\State \Return $\bm{\phi}$;
\end{algorithmic}
\label{alg:tree_shap}
\end{algorithm}

\newpage
\textcolor{white}
\newpage
\section{Shapley-Taylor Indices}\label{sec:shap_taylor}

One of the purposes of deriving a proof for Interventional TreeSHAP is that
it can be leveraged to discover efficient algorithms for computing
other Game Theory indices.
In this section, we employ previous results to prove how to compute Shapley-Taylor Indices efficiently. 

\subsection{Game Theory}

\begin{definition}[Shapley Taylor Indices \citep{sundararajan2020shapley}]
Given a set of $d$ players $[d]:=\{1,2,\ldots,d\}$ and a cooperative game $\nu:2^{[d]}\rightarrow \R$, the Shapley-Taylor Indices are defined as
\begin{equation}
    \Phi_{ij}(\nu) = 
        \begin{cases}
        \nu(\{i\})-\nu(\emptyset) & \text{if  } i=j\quad\,\,\,(\textbf{Main Effect})\\
        \mathlarger{\sum}_{S\subseteq [d]\setminus\{i, j\}} {W(|S|, d)}\nabla_{ij}(S) & \text{otherwise}\,\,\,\,(\textbf{Interactions}).
    \end{cases}
\end{equation}
with
\begin{equation}
    \nabla_{ij}(S) = \nu(S\cup \{i,j\}) - \nu(S\cup\{j\})
    - \big[\nu(S\cup \{i\}) - \nu(S)\big].
\end{equation}
\label{def:shapley_taylor}
\end{definition}
The Shapley-Taylor Indices were invented as a way to provide interaction indices that respect the same additive property as the original Shapley values
\begin{equation}
    \sum_{i=1}^d\sum_{j=1}^d \Phi_{ij}(\nu) = \nu([d]) - \nu(\emptyset).
\end{equation}

As with classical Shapley values, the Shapley-Taylor indices
assign no credit to dummy players. Letting $D$ be the set of dummy
players, we have
\begin{equation}
    i\in D \,\,\text{ or }\,\,j\in D\Rightarrow \Phi_{ij}(\game)=0.
\end{equation}
Like previously, the Shapley-Taylor indices of non-dummy players follow
\textbf{Definition \ref{def:shapley_taylor}}, but where the set of
all players $[d]$ is replaced by all non-dummy players $D^C$.
\begin{lemma}[Dummy Reduction for Shapley-Taylor]
Let $D\subseteq [d]$ be the set of all dummy players of the game $\nu$, and $D^C=[d]\setminus D$ be the set of non-dummy players, then
\begin{equation}
    \Phi_{ij}(\nu) = 
    \begin{cases}
       0 &\quad\text{if}\,\, i\in D \text{ or } j\in D\\
       \nu(\{i\})-\nu(\emptyset) &\quad\text{else if}\,\, i=j\\
       \sum_{S\subseteq D^C \setminus\{i,j\}} W(|S|, |D^C|)\nabla_{ij}(S) &\quad\text{otherwise.}
     \end{cases}
\end{equation}
\label{lemma:dummy_shap_taylor}
\end{lemma}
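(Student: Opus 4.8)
The plan is to mirror the proof of the standard Dummy Reduction (\textbf{Lemma \ref{lemma:dummy}}), treating the three cases of the statement in turn. The one structural fact that makes everything go through is that the second finite difference $\nabla_{ij}$ is insensitive to the presence of a dummy in its argument $S$; once this is isolated, the rest is bookkeeping identical to the single-difference case. I would also record at the outset that $\nabla_{ij}=\nabla_{ji}$, which follows by expanding both definitions and seeing that the four terms agree, so that the cases $i\in D$ and $j\in D$ can be handled interchangeably.

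First I would dispose of the case where one of $i,j$ is itself a dummy. If $i=j\in D$, then taking $S=\emptyset$ in the dummy condition gives $\nu(\{i\})=\nu(\emptyset)$, so the main effect $\Phi_{ii}(\nu)=\nu(\{i\})-\nu(\emptyset)=0$. If instead $i\neq j$ and, say, $j\in D$, then for every $S\subseteq[d]\setminus\{i,j\}$ the dummy property gives both $\nu(S\cup\{i,j\})=\nu(S\cup\{i\})$ and $\nu(S\cup\{j\})=\nu(S)$; substituting into $\nabla_{ij}(S)=\nu(S\cup\{i,j\})-\nu(S\cup\{j\})-\nu(S\cup\{i\})+\nu(S)$ cancels the four terms in pairs, so $\nabla_{ij}(S)=0$ and hence $\Phi_{ij}(\nu)=0$. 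The symmetry noted above covers $i\in D$ as well. The middle case $i=j\notin D$ requires nothing, since $\Phi_{ii}(\nu)=\nu(\{i\})-\nu(\emptyset)$ is already the definition.

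For the remaining interaction case, with $i\neq j$ both non-dummy, I would eliminate the dummies one at a time exactly as in \textbf{Lemma \ref{lemma:dummy}}. Fix a dummy $k\in D$ (necessarily $k\neq i,j$) and split the sum over $S\subseteq[d]\setminus\{i,j\}$ according to whether $k\in S$, pairing each $S\subseteq[d]\setminus\{i,j,k\}$ with $S\cup\{k\}$. The crucial observation is that the dummy property forces $\nabla_{ij}(S\cup\{k\})=\nabla_{ij}(S)$: each of the four terms defining $\nabla_{ij}(S\cup\{k\})$ collapses to the corresponding term without $k$. The paired weight is then $W(|S|,d)+W(|S|+1,d)=W(|S|,d-1)$, the same binomial identity already invoked in \textbf{Lemma \ref{lemma:dummy}}. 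This rewrites the index as a sum over $S\subseteq([d]\setminus\{k\})\setminus\{i,j\}$ with weights $W(|S|,d-1)$, i.e.\ the Shapley-Taylor interaction for the game on $d-1$ players. Iterating over all of $D$ produces the claimed sum over $S\subseteq D^C\setminus\{i,j\}$ with weights $W(|S|,|D^C|)$.

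The main obstacle is simply verifying the invariance $\nabla_{ij}(S\cup\{k\})=\nabla_{ij}(S)$ for a dummy $k$; everything downstream is a transcription of the earlier argument, as both the weight-combination identity and the pairing bookkeeping are identical to the single-difference setting. No idea beyond \emph{``a dummy passes transparently through the second difference''} is needed.
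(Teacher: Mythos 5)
Your proposal is correct and follows exactly the route the paper intends: its proof of \textbf{Lemma \ref{lemma:dummy_shap_taylor}} is the single sentence that the argument is \emph{mutatis mutandis} that of \textbf{Lemma \ref{lemma:dummy}}, and you have simply spelled out the mutanda --- the dummy cases for the diagonal and off-diagonal entries, the invariance $\nabla_{ij}(S\cup\{k\})=\nabla_{ij}(S)$ for a dummy $k$, and the same pairing with the weight identity $W(|S|,d)+W(|S|+1,d)=W(|S|,d-1)$. All of these checks are valid, so your write-up is a faithful (and more complete) version of the paper's proof.
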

\begin{proof}
The proof is mutatis mutandis like the proof of \textbf{Lemma \ref{lemma:dummy}}.
\end{proof}

We consider in the sequel the Shapley-Taylor indices associated with the Baseline Interventional game $\nu_{\hxz}$ (cf. \textbf{Definition \ref{def:explain_game}}) for a model $h$, an instance $\bm{x}$ and a baseline instance $\bm{z}$. We simply write $\bm{\Phi}(h, \bm{x}, \bm{z})$ in place of $\bm{\Phi}(\nu_{\hxz})$ for these indices.
\subsection{Shapley-Taylor Indices Computation}

Since the Shapley-Taylor Indices are also linear w.r.t coalitional
games, computing these indices for a forest of decision trees reduces to the computation of these indices for a decision stump $h_P$ associated with a maximal path $P$. Just like with Shapley values, we can use \textbf{Algorithm \ref{alg:simplest}} as the skeleton of our algorithm. Fixing a decision tree $h$, instances $\bm{x}$ and $\bm{z}$, as well as a maximal path $P$ in $h$, we focus on the computation of the Shapley-Taylor Indices $\bm{\Phi}(h_P, \bm{x}, \bm{z})$. 

First we note that \textbf{Lemma \ref{lemma:avoid_D} \& \ref{lemma:disjoint}}
lead to similar corollaries about Shapley-Taylor Indices. 

\begin{corollary}
If $P$ contains an edge of type \typeB,
then $\bm{\Phi}(h_P, \bm{x}, \bm{z}) = \bm{0}$.
\label{corollary:shap_taylor_avoid_D}
\end{corollary}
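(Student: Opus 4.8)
The plan is to mirror exactly the argument used for ordinary Shapley values in \textbf{Corollary \ref{corollary:shap_avoid_D}}, now exploiting that the Shapley-Taylor indices are likewise a linear map on the space of games. The whole point is that, once the underlying game is shown to be identically zero, nothing about the specific index formula matters — only its linearity does.

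First I would invoke \textbf{Lemma \ref{lemma:avoid_D}}: since $P$ contains an edge of type \typeB, we have $h_P(\,\replace{S}\,) = 0$ for every $S\subseteq [d]$. This means the associated Baseline Interventional game $\game_{h_P,\bm{x},\bm{z}}(S) = h_P(\,\replace{S}\,)$ is identically zero, i.e.\ it is the null game on $2^{[d]}$. Next I would apply the linearity of the Shapley-Taylor indices with respect to games, as already noted before \textbf{Definition \ref{def:shapley_taylor}}. Any linear map sends the zero element of its domain to the zero element of its codomain (formally, homogeneity with scalar $\alpha=0$ gives $\bm{\Phi}(\alpha\,\game_{h_P,\bm{x},\bm{z}}) = \alpha\,\bm{\Phi}(\game_{h_P,\bm{x},\bm{z}})$, which is $\bm{0}$ at $\alpha=0$). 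Feeding the null game into this identity yields $\bm{\Phi}(h_P, \bm{x}, \bm{z}) = \bm{0}$, as desired.

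There is essentially no obstacle here. The only fact one must be sure of is that the Shapley-Taylor indices genuinely form a linear functional of the game, which is immediate from \textbf{Definition \ref{def:shapley_taylor}}: each main-effect term $\nu(\{i\})-\nu(\emptyset)$ and each interaction term $\sum_S W(|S|,d)\nabla_{ij}(S)$ is a finite linear combination of evaluations of $\nu$, and hence $\nu\mapsto\bm{\Phi}(\nu)$ is linear. Consequently the statement is a direct transfer of \textbf{Corollary \ref{corollary:shap_avoid_D}} to the Taylor setting, with \textbf{Lemma \ref{lemma:avoid_D}} reused verbatim and only the index map replaced — which is exactly why the text can assert the corollaries follow ``mutatis mutandis.''
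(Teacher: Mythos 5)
Your proof is correct and follows exactly the paper's argument: the paper likewise deduces the corollary from \textbf{Lemma \ref{lemma:avoid_D}} together with linearity of the Shapley-Taylor indices (homogeneity with $\alpha=0$ sending the null game to the zero vector). Your additional remark verifying that $\nu\mapsto\bm{\Phi}(\nu)$ is linear from \textbf{Definition \ref{def:shapley_taylor}} is a harmless elaboration of a fact the paper takes as noted.
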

\begin{proof}
Follows from \textbf{Lemma \ref{lemma:avoid_D}} and the linearity of Shapley-Taylor indices.
\end{proof}

\begin{corollary}
    If $S_X\cap S_Z \neq \emptyset$, then
    $\bm{\Phi}(h_P, \bm{x},\bm{z}) = \bm{0}.$
    \label{corollary:shap_taylor_disjoint}
\end{corollary}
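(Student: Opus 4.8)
The plan is to follow the exact same two-step template used for \textbf{Corollary \ref{corollary:shap_disjoint}}, merely swapping ordinary Shapley values for Shapley-Taylor indices. First I would invoke \textbf{Lemma \ref{lemma:disjoint}}: under the hypothesis $S_X \cap S_Z \neq \emptyset$, that lemma guarantees $h_P(\replace{S}) = 0$ for every coalition $S \subseteq [d]$. Equivalently, the associated Baseline Interventional game $\game_{h_P, \bm{x}, \bm{z}}$ is identically the null game, since it evaluates to zero on all of $2^{[d]}$.

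Second, I would appeal to the linearity of the Shapley-Taylor indices, exactly as the original corollary appealed to Equation \ref{eq:homogenity} with $\alpha = 0$. Because $\bm{\Phi}$ is linear in its game argument, and any linear map sends the zero vector to the zero vector, the indices of the null game must vanish. Hence $\bm{\Phi}(h_P, \bm{x}, \bm{z}) = \bm{0}$, which is the claim.

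I do not anticipate a genuine obstacle here, as the content of the corollary is entirely carried by \textbf{Lemma \ref{lemma:disjoint}}, whose conclusion does not depend on which game-theoretic index we subsequently feed the null game into. The only point worth a sentence of care is confirming that the linearity property transfers verbatim to the Shapley-Taylor setting; this is immediate from \textbf{Definition \ref{def:shapley_taylor}}, since each index $\Phi_{ij}$ is a fixed linear combination of evaluations of the game, so the whole vector $\bm{\Phi}$ depends linearly on $\game$ and in particular maps the null game to $\bm{0}$. The resulting proof will therefore be a single short paragraph of essentially the same shape as the proof of \textbf{Corollary \ref{corollary:shap_taylor_avoid_D}} immediately preceding it.
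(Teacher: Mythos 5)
Your proposal is correct and matches the paper's proof exactly: the paper likewise derives the result from \textbf{Lemma \ref{lemma:disjoint}} (the game is null) together with linearity of the Shapley-Taylor indices. Your extra remark that linearity is immediate from \textbf{Definition \ref{def:shapley_taylor}}, each $\Phi_{ij}$ being a fixed linear combination of game evaluations, is a harmless elaboration of the same one-line argument.
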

\begin{proof}
By \textbf{Lemma \ref{lemma:disjoint}} and Linearity of the Shapley-Taylor values.
\end{proof}

Once we reach a leaf we are meant to compute the Shapley-Taylor Indices of the decision stump $h_P$ associated with the current maximal path $P$. We again use \textbf{Lemma \ref{lemma:flow_block}} to reduce the complexity.

\begin{theorem}[Complexity Reduction for Shapley-Taylor]
If $P$ contains no type \typeB edges and the sets $S_X$ and $S_Z$ are disjoint, then all features that are not in $S_{XZ}$ are dummies and we get
\begin{align}
    \Phi_{ii}(h_P, \bm{x},\bm{z}) &= h_P(\replace{\{i\}}) - h_P(\replace{\emptyset})&& i\in S_{XZ},  \label{eq:phi_ii_final_taylor} \\
    \Phi_{ij}(h_P, \bm{x},\bm{z}) &= \sum_{S\subseteq S_{XZ}\setminus \{i,j\}}W(|S|, |S_{XZ}|)\nabla_{ij}(S) && i,j\in S_{XZ}, i\neq j    \label{eq:int_phi_final}
\end{align}
with
\begin{equation}
    \nabla_{ij}(S):=h_P(\,\replace{S\cup\{i,j\}}\,)-
    h_P(\,\replace{S\cup\{j\}}\,) - \big[h_P(\,\replace{S\cup\{i\}}\,)-
    h_P(\,\replace{S}\,)\big]
\end{equation}

The exponential cost $\mathcal{O}(2^{|S_{XZ}|})$ of computing $\Phi_{ij}$ reduces to $\mathcal{O}(1)$ following
\begin{align}
    S_X=\{i\} &\Rightarrow \Phi_{ii}(h_P, \bm{x},\bm{z}) = v_l  \label{eq:FinalTaylor1}\\
    i \in S_Z \text{ and } S_X=\emptyset &\Rightarrow \Phi_{ii}(h_P, \bm{x},\bm{z}) = -v_l  \label{eq:FinalTaylor2}\\
    i \neq j \text{ and } i,j \in S_X &\Rightarrow \Phi_{ij}(h_P, \bm{x},\bm{z}) = W(|S_X|-2, |S_{XZ}|)v_l  \label{eq:FinalTaylor3}\\
    i \neq j \text{ and } i,j \in S_Z &\Rightarrow \Phi_{ij}(h_P, \bm{x},\bm{z}) = W(|S_X|, |S_{XZ}|)v_l  \label{eq:FinalTaylor4}
    \\
    i \neq j \text{ and } i \in S_X, j\in S_Z &\Rightarrow \Phi_{ij}(h_P, \bm{x},\bm{z}) = -W(|S_X|-1, |S_{XZ}|)v_l  \label{eq:FinalTaylor5}\\
    \text{else} &\quad\,\,\, \Phi_{ij}(h_P, \bm{x},\bm{z}) = 0,  \label{eq:FinalTaylor6}
\end{align}

given that the coefficients $W$
are computed and stored in advance.
\label{thm:taylor_complexity_reduction}
\end{theorem}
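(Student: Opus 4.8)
The plan is to mirror the proof of \textbf{Theorem \ref{thm:complexity_reduction}}, the only genuinely new ingredient being the four-term finite difference $\nabla_{ij}$ in place of the two-term difference $h_P(\replace{S\cup\{i\}})-h_P(\replace{S})$. First I would dispose of the reductions exactly as before: the trivial case of a type \typeB edge is handled by \textbf{Corollary \ref{corollary:shap_taylor_avoid_D}}, and disjointness of $S_X$ and $S_Z$ is assumed (failing it gives $\bm{0}$ by \textbf{Corollary \ref{corollary:shap_taylor_disjoint}}). Then \textbf{Lemma \ref{lemma:SAB_dummy}} shows every feature outside $S_{XZ}$ is a dummy, so \textbf{Lemma \ref{lemma:dummy_shap_taylor}} restricts every sum to coalitions $S\subseteq S_{XZ}$; this is precisely Equations \ref{eq:phi_ii_final_taylor} and \ref{eq:int_phi_final}, and it also settles the ``else'' branch \ref{eq:FinalTaylor6} whenever $i$ or $j$ lies outside $S_{XZ}$.

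The engine for everything that follows is \textbf{Lemma \ref{lemma:flow_block}}, which states that for any $S\subseteq S_{XZ}$ we have $h_P(\replace{S})=v_l\,\mathbbm{1}(S=S_X)$. For the main effects I would plug this into \ref{eq:phi_ii_final_taylor}: the term $h_P(\replace{\{i\}})$ equals $v_l$ exactly when $S_X=\{i\}$, and $h_P(\replace{\emptyset})$ equals $v_l$ exactly when $S_X=\emptyset$. This immediately yields $\Phi_{ii}=v_l$ when $S_X=\{i\}$ (Eq.~\ref{eq:FinalTaylor1}) and $\Phi_{ii}=-v_l$ when $S_X=\emptyset$ and $i\in S_Z$ (Eq.~\ref{eq:FinalTaylor2}); in every other admissible case both indicator conditions fail, so $\Phi_{ii}=0$, consistent with \ref{eq:FinalTaylor6}.

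For the interactions, the key observation is that each of the four terms constituting $\nabla_{ij}(S)$ equals $v_l$ precisely when its coalition argument equals $S_X$ and is $0$ otherwise. Since $S$ ranges over subsets of $S_{XZ}\setminus\{i,j\}$ (hence contains neither $i$ nor $j$) and $S_{XZ}$ is the disjoint union $S_X\sqcup S_Z$, each of $i,j$ sits in exactly one part, and I would run through the three resulting configurations. If $i,j\in S_X$, only the $S\cup\{i,j\}$ term can reach $S_X$, forcing the unique contributing coalition $S=S_X\setminus\{i,j\}$ with $\nabla_{ij}(S)=v_l$, giving weight $W(|S_X|-2,|S_{XZ}|)$ (Eq.~\ref{eq:FinalTaylor3}). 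If $i,j\in S_Z$, neither lies in $S_X$, so only the $S$ term reaches $S_X$, forcing $S=S_X$ with $\nabla_{ij}(S)=0-0-(0-v_l)=v_l$ and weight $W(|S_X|,|S_{XZ}|)$ (Eq.~\ref{eq:FinalTaylor4}). If $i\in S_X$ and $j\in S_Z$, only the $S\cup\{i\}$ term reaches $S_X$, forcing $S=S_X\setminus\{i\}$ with $\nabla_{ij}(S)=-v_l$ and weight $W(|S_X|-1,|S_{XZ}|)$, whence the sign flip in Eq.~\ref{eq:FinalTaylor5}; symmetry of $\nabla_{ij}$ in its two indices covers the swapped placement as well. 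In each configuration at most one coalition contributes, collapsing the exponential sum to a single evaluated term.

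The main obstacle is the bookkeeping in this last paragraph: for each placement of $i$ and $j$ one must verify that exactly one of the four terms of $\nabla_{ij}$ can ever equal $S_X$ for some admissible $S$, and then read off both the correct cardinality argument of $W$ and the correct sign. The disjointness hypothesis $S_X\cap S_Z=\emptyset$ is used throughout to guarantee that the membership of $i$ and $j$ in $S_X$ versus $S_Z$ is unambiguous, which is exactly what forces each case to select a single surviving coalition and makes the $\mathcal{O}(2^{|S_{XZ}|})$ sum collapse to $\mathcal{O}(1)$.
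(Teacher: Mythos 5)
Your proposal is correct and follows essentially the same route as the paper's proof: dummy reduction via \textbf{Lemma \ref{lemma:dummy_shap_taylor}} and \textbf{Lemma \ref{lemma:SAB_dummy}} to obtain Equations \ref{eq:phi_ii_final_taylor} and \ref{eq:int_phi_final}, then \textbf{Lemma \ref{lemma:flow_block}} to collapse each sum to the single coalition for which one of the (at most one, by disjointness) arguments of $\nabla_{ij}$ equals $S_X$, with the same case analysis, signs, and $W$ arguments. Your explicit handling of the ``else'' branch \ref{eq:FinalTaylor6} and the symmetry of $\nabla_{ij}$ for the swapped placement matches what the paper leaves implicit or states in passing.
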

\begin{proof}
Equations \eqref{eq:phi_ii_final_taylor} and \eqref{eq:int_phi_final} are a direct consequence of \textbf{Lemma \ref{lemma:dummy_shap_taylor}}. 

We will first tackle the coefficients $\Phi_{ii}(h_P, \bm{x}, \bm{z}) = h_P(\replace{\{i\}}) - h_P(\replace{\emptyset})$ with $i\in S_{XZ}$. There are only
two possible cases where this difference is non-null. 
According to \textbf{Lemma \ref{lemma:flow_block}}, when $S_X=\{i\}$, we have
$h_P(\replace{\{i\}}) - h_P(\replace{\emptyset}) = h_P(\replace{S_X}) - h_P(\replace{\emptyset}) = v_l - 0=v_l$. When $S_X=\emptyset$ and
$i\in S_Z$,
we have $h_P(\replace{\{i\}}) - h_P(\replace{\emptyset}) =h_P(\replace{S_X\cup\{i\}}) - h_P(\replace{S_X})= 0 - v_l=-v_l$. We have thus proven Equations \eqref{eq:FinalTaylor1} and \eqref{eq:FinalTaylor2}.

Secondly, we tackle the non-diagonal elements $i\neq j$. To reduce the exponential complexity of computing these terms, we must identify the coalitions $S\subseteq S_{XZ}\setminus \{i,j\}$ for which
$$
\nabla_{ij}(S):=h_P(\,\replace{S\cup\{i,j\}}\,)-
    h_P(\,\replace{S\cup\{j\}}\,) - \big[h_P(\,\replace{S\cup\{i\}}\,)-
    h_P(\,\replace{S}\,)\big]
$$
is non-null. Recall that by \textbf{Lemma \ref{lemma:flow_block}}, $h_P(\replace{S})=0$ for all $S\subseteq S_{XZ}$ except for $S=S_X$. So for the contribution of a coalition $S$ to be non-zero, one of $S$, $S\cup\{i\}$, $S\cup \{j\}$ or $S\cup \{i,j\}$ must be equal to $S_X$. We distinguish four cases.

If $i,j\in S_X$, the only coalition $S \subseteq S_{XZ}\setminus \{i,j\}$ for which $\nabla_{ij}(S)$ is non zero is given by $S=S_X\setminus \{i,j\}$ and $\nabla_{ij}(S)= v_l - 0 - [0 - 0]=v_l$. Hence in this case, $\Phi_{ij}(h_P, \bm{x}, \bm{z})=W(|S|, |S_{XZ}|)v_l=W(|S_X|-2, |S_{XZ}|)v_l$.


If $i,j\in S_Z$, and so $i,j\notin S_X$, the only coalition with a non-zero contribution is $S=S_X$ and $\nabla_{ij}(S):=0-0 - \big[0 - v_l\big]=v_l$. It follows that $\Phi_{ij}(h_P, \bm{x}, \bm{z})=W(|S|, |S_{XZ}|)v_l=W(|S_X|, |S_{XZ}|)v_l$.


If $i\in S_X$ and $j\in S_Z$, the only coalition that contributes to
the Shapley-Taylor indices is given by $S=S_X\setminus\{i\}$, which leads to 
$\nabla_{ij}(S):=0-0 - \big[v_l - 0\big]=-v_l$.
We have $\Phi_{ij}(h_P, \bm{x}, \bm{z})=-W(|S|, |S_{XZ}|)v_l=-W(|S_X|-1, |S_{XZ}|)v_l$.


Finally, the case where $j\in S_X$ and $i\in S_Z$ is symmetric and leads to the same formula as the previous case. This proves Equations \eqref{eq:FinalTaylor3}, \eqref{eq:FinalTaylor4} and \eqref{eq:FinalTaylor5} and concludes the proof.
\end{proof}

We present the procedure for efficient computations of Shapley-Taylor indices in \textbf{Algorithm
\ref{alg:taylor_tree_shap}}. Note that our results on Interventional TreeSHAP easily apply to these new indices, which we view as a testament to the great value of our mathematical proof for TreeSHAP.
\begin{algorithm}
\caption{Interventional Taylor-TreeSHAP}
\begin{algorithmic}[1]
\Procedure{Recurse}{$n$, $S_X$, $S_Z$}
    \If{$n\in L$}\Comment{cf. Theorem \ref{thm:taylor_complexity_reduction}}
        \For{$i\in S_X\cup S_Z$}
            \For{$j\in S_X\cup S_Z$}
                \If{$i=j$}
                    \If{$S_X=\{i\}$}
                        \State $\Phi_{ii} \pluseq v_n$;
                    \ElsIf{$S_X=\emptyset$}
                        \State $\Phi_{ii} \minuseq v_n$;
                    \EndIf
                \Else
                    \If{$i,j \in S_X$}
                        \State $\Phi_{ij} \pluseq W(|S_X|-2, |S_{XZ}|)v_n$;
                    \ElsIf {$i,j \in S_Z$}
                        \State $\Phi_{ij} \pluseq W(|S_X|, |S_{XZ}|)v_n$;
                    \Else
                        \State $\Phi_{ij} \minuseq W(|S_X|-1, |S_{XZ}|)v_n$;
                    \EndIf
                \EndIf
            \EndFor
        \EndFor
     \ElsIf{$\bm{x}_\text{child}=\bm{z}_\text{child}$}\Comment{\hyperlink{scenario_1}{Scenario 1}}
        \State \Return \Call{Recurse}{$\bm{x}_\text{child}, S_X, S_Z$};
    \ElsIf{$i_n\in S_X\cup S_Z$}\Comment{\hyperlink{scenario_2}{Scenario 2}}
        \If{$i_n \in S_X$}
            \State \Return \Call{Recurse}{$\bm{x}_\text{child}, S_X, S_Z$};
        \Else
            \State \Return \Call{Recurse}{$\bm{z}_\text{child}, S_X, S_Z$};
        \EndIf
    \Else\Comment{Scenario 3}
        \State \Call{Recurse}{$\bm{x}_\text{child}, S_X\cup\{i_n\}, S_Z$};
        \State \Call{Recurse}{$\bm{z}_\text{child}, S_X, S_Z\cup\{i_n\}$};
    \EndIf
\EndProcedure
\State $\bm{\Phi}=\texttt{zeros}(d,d)$;
\State \Call{Recurse}{0, $S_X=\emptyset$, $S_Z=\emptyset$};
\State \Return $\bm{\Phi}$;
\end{algorithmic}
\label{alg:taylor_tree_shap}
\end{algorithm}

\newpage

\section{Extension to Partitions of Features}\label{sec:embed}

In certain settings, one may be interested in characterizing the effect of a set of features on the model's output, instead of considering their individual effects. This can be achieved by treating a set of features as a single super-feature. Assuming $d'$ features are fed to the model, let us consider that we partition them in $d$ groups: $[d']=\bigcup \{P_i |i\in[d]\}\,$. This yields a new attribution problem where we wish to associate a real number to each group $i\in[d]$ that represents the contribution of the features in $P_i$, when considered as a group, toward the model output. An important application of group attributions is when one-hot encoded categorical features are fed to the Decision Trees. We will discuss this application later in the section.

A partition of features $[d']$ can be conveniently described with an indexing function 
$\I:[d']\to[d]$ that associates each feature $j\in [d']$ to its group index
$\I(j)$. Note that the pre-image map $\I^{-1}:2^{[d]}\to 2^{[d']}$ allows to recover the groups of features using the partition notation: $\I^{-1}(\{i\})=P_i$, $i\in[d]$.
The partition of the $d'$ features into $d$ groups naturally induces a way to associate to any game $\game$ on $[d']$ a game $\game^\mathcal{I}$ on $[d]$ given by $\game^\mathcal{I}(S)=\game(\mathcal{I}^{-1}(S))$ for all $S\subseteq [d]$. In our specific case, given inputs $\bm{x}$ and $\bm{z}$ for a model $h$, we have
\begin{equation}
    \game^\I_{\hxz}(S):= \game_{\hxz}(\I^{-1}(S))
    =h(\,\replace{\I^{-1}(S)}\,).
\end{equation}
See Figure \ref{fig:perm_groups} for an illustration of how the replace function $\replace{\I^{-1}(S)}$ is employed in $\game^\I$. We note that features within the same group are always replaced simultaneously and so Shapley values for the game $\nu^\I_{\hxz}$ represent the joint contributions of each group towards the gap $h(\bm{x})-h(\bm{z})$.
Computing Shapley values for the game $\nu^\I_{\hxz}$ for a decision Tree $h$ cannot be done using TreeSHAP, but we now show how to leverage our previous results to adapt TreeSHAP in this new setting.

Firstly, by linearity of the Shapley values we can again focus on a single maximal path $P$, and study the Shapley values of the game $\game^\I_{\hpxz}$. Henceforth, we freely use the concepts and notations introduced in the previous sections for the computation of the Shapley values for the game $\game_{\hpxz}$ on $[d']$ and we show how to use them to analyze the game $\game^\I_{\hpxz}$. For example, type \typeX, type \typeZ edges and the sets $S_X$, $S_Z$ are all defined as in \textbf{Section~\ref{sec:tree_shap}} with respect to the maximal path $h_P$ in a decision tree $h$ and the inputs $\bm{x}$ and $\bm{z}$.
We note that if $P$ contains a type \typeB edge, then \textbf{Lemma \ref{lemma:avoid_D}} ensures that the game $\game_{h_P,\bm{x},\bm{z}}$ is null and this implies that the game $\game^\I_{h_P,\bm{x},\bm{z}}$ is null too. The resulting Shapley values 
$\bm{\phi}(\game^\I_{h_P,\bm{x},\bm{z}})$ are therefore constantly zero. Hence, we can again ignore maximal paths with type \typeB edges. 
The next result is analogous to \textbf{Lemma \ref{lemma:disjoint}} but for the game $\game^\I$.

\begin{figure}[t]
    \hspace{2.5cm}
        \resizebox{8cm}{!}{\begin{tikzpicture}

    \def\y{0}
    \draw[step=0.5] (-1, \y) grid (3, \y+0.5);
    \node at (-1.5, \y+0.25) {$\bm{z}$};
    \node at (-0.75, \y+0.25) {$\textcolor{blue}{z_0}$};
    \node at (-0.25, \y+0.25) {$\textcolor{red}{z_1}$};
    \node at (0.25, \y+0.25) {$\textcolor{red}{z_2}$};
    \node at (0.75, \y+0.25) {$\textcolor{orange}{z_3}$};
    \node at (1.25, \y+0.25) {$\textcolor{orange}{z_4}$};
    \node at (1.75, \y+0.25) {$\textcolor{orange}{z_5}$};
    \node at (2.25, \y+0.25) {$\textcolor{orange}{z_6}$};
    \node at (2.75, \y+0.25) {$\green{z_7}$};

    \def\y{-1.5}
    \draw[step=0.5] (-1, \y) grid (3, \y+0.5);
    \node at (-1.5, \y+0.25) {$\bm{x}$};
    \node at (-0.75, \y+0.25) {$\textcolor{blue}{x_0}$};
    \node at (-0.25, \y+0.25) {$\textcolor{red}{x_1}$};
    \node at (0.25, \y+0.25) {$\textcolor{red}{x_2}$};
    \node at (0.75, \y+0.25) {$\textcolor{orange}{x_3}$};
    \node at (1.25, \y+0.25) {$\textcolor{orange}{x_4}$};
    \node at (1.75, \y+0.25) {$\textcolor{orange}{x_5}$};
    \node at (2.25, \y+0.25) {$\textcolor{orange}{x_6}$};
    \node at (2.75, \y+0.25) {$\green{x_7}$};

    \def\y{-3}
    \draw[step=0.5] (-1, \y) grid (3, \y+0.5);
    \node at (-2.35, \y+0.2) {$\replace{\I^{-1}(\,\{\textcolor{blue}{0},\textcolor{orange}{2}\}\,)}$};
    \node at (-0.75, \y+0.25) {$\textcolor{blue}{x_0}$};
    \node at (-0.25, \y+0.25) {$\textcolor{red}{z_1}$};
    \node at (0.25, \y+0.25) {$\textcolor{red}{z_2}$};
    \node at (0.75, \y+0.25) {$\textcolor{orange}{x_3}$};
    \node at (1.25, \y+0.25) {$\textcolor{orange}{x_4}$};
    \node at (1.75, \y+0.25) {$\textcolor{orange}{x_5}$};
    \node at (2.25, \y+0.25) {$\textcolor{orange}{x_6}$};
    \node at (2.75, \y+0.25) {$\green{z_7}$};

	\draw[-stealth] (-0.75,0) -- (-0.75,-1);
	\draw[-stealth] (1.5,0) -- (1.5,-1);
\end{tikzpicture}}
        \caption{The replace function applied to partitions of $d'=8$ features into $d=4$ groups indicated by colors. Importantly, all features in the same group are replaced simultaneously.} 
    \label{fig:perm_groups}
\end{figure}

\begin{lemma}
    If $ \I(S_X) \cap \I(S_Z)\neq \emptyset $, then 
    \begin{equation}
        \forall\, S\subseteq [d] \quad \game^\I_{\hpxz}(S) = 0.
    \end{equation}    
    \label{lemma:disjoint_part}
\end{lemma}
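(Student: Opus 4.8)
The plan is to reduce the statement to the factored form of $h_P$ already established in Equation \ref{eq:h_s_boolean}, mirroring the argument of \textbf{Lemma \ref{lemma:disjoint}} but tracking \emph{groups} rather than individual features. First I would dispose of the trivial case: if $P$ contains a type \typeB edge, then \textbf{Lemma \ref{lemma:avoid_D}} already forces $\game_{\hpxz}\equiv 0$, hence $\game^\I_{\hpxz}(S)=\game_{\hpxz}(\I^{-1}(S))=0$ for every $S\subseteq[d]$ and we are done. So I may assume $P$ has no type \typeB edge, fix an arbitrary $S\subseteq[d]$, set $T:=\I^{-1}(S)\subseteq[d']$, and apply Equation \ref{eq:h_s_boolean} to the coalition $T$, giving $\game^\I_{\hpxz}(S)=h_P(\replace{T})=v_l\prod_{e\text{ type }\typeX}\mathbbm{1}(i_e\in T)\prod_{e\text{ type }\typeZ}\mathbbm{1}(i_e\notin T)$.

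The key observation is that the hypothesis $\I(S_X)\cap\I(S_Z)\neq\emptyset$ yields a common group $i\in[d]$ together with features $j_X\in S_X$ and $j_Z\in S_Z$ lying in it, i.e.\ $\I(j_X)=\I(j_Z)=i$. By definition of $S_X$ and $S_Z$ there is a type \typeX edge $e$ with $i_e=j_X$ and a type \typeZ edge $e'$ with $i_{e'}=j_Z$ along $P$. Since features in the same group are replaced simultaneously, membership of $j_X$ and $j_Z$ in $T=\I^{-1}(S)$ is governed by the single bit $i\in S$: indeed $j\in\I^{-1}(S)\iff\I(j)\in S$, so both $j_X,j_Z\in T$ when $i\in S$ and neither lies in $T$ when $i\notin S$. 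I would then split on this bit. When $i\in S$, the factor $\mathbbm{1}(i_{e'}\notin T)=\mathbbm{1}(j_Z\notin T)$ attached to the type \typeZ edge $e'$ vanishes; when $i\notin S$, the factor $\mathbbm{1}(i_e\in T)=\mathbbm{1}(j_X\in T)$ attached to the type \typeX edge $e$ vanishes. In either case the product above contains a zero factor, so $h_P(\replace{T})=0$ and thus $\game^\I_{\hpxz}(S)=0$; as $S$ was arbitrary, this proves the claim.

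The only subtlety, and the step I would be most careful with, is the bookkeeping of the preimage: one must record that the two antagonistic edges, $e$ which ``wants'' its feature inside the coalition and $e'$ which ``wants'' its feature outside, can no longer be satisfied simultaneously once their shared group $i$ drags $j_X$ and $j_Z$ into or out of $T$ together. This is precisely the obstruction that made $S_X\cap S_Z\neq\emptyset$ fatal in \textbf{Lemma \ref{lemma:disjoint}}; passing to partitions merely relaxes the condition from equality of feature indices to equality of group indices, so I do not expect any genuinely new difficulty beyond this translation.
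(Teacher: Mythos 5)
Your proof is correct and follows essentially the same route as the paper's: both locate a shared group index in $\I(S_X)\cap\I(S_Z)$ with its antagonistic type \typeX and type \typeZ edges, and both rest on the equivalence $j\in\I^{-1}(S)\iff\I(j)\in S$ applied to the factored form in Equation~\ref{eq:h_s_boolean}. The only cosmetic differences are that the paper compresses your case split on the bit $i\in S$ into the identity $\mathbbm{1}(j\in S)\mathbbm{1}(j\notin S)=0$, and that your explicit dispatch of the type \typeB case via Lemma~\ref{lemma:avoid_D} is handled in the paper by the text immediately preceding the lemma.
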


\begin{proof}
Since $ \I(S_X) \cap \I(S_Z)\neq \emptyset $, we can find an index $j\in [d]$ that belongs to both $\I(S_X)$ and $\I(S_Z)$. By definition of $S_X$ and $S_Z$, this means that we can find in $P$ an edge $e$ of \typeX and an edge $e'$ of type \typeZ such that $\I(i_{e})=j=\I(i_{e'})$. Now using Equation~\ref{eq:h_s_boolean}, for all $S\subseteq [d]$ we obtain:
\begin{align*}
\game^\I_{\hpxz}(S)=\game_{\hpxz}(\I^{-1}(S))&\propto \mathbbm{1}(i_e\in \I^{-1}(S))\mathbbm{1}(i_{e'}\notin \I^{-1}(S))
\\
&=\mathbbm{1}(\I(i_e)\in S)\mathbbm{1}(\I(i_{e'})\notin S)\\
&=\mathbbm{1}(j\in S)\mathbbm{1}(j\notin S)=0.
\end{align*}
\end{proof}


We have just identified two necessary conditions for the Shapley values of $\game^\I$ to be null: if $P$ contains a type \typeB edge or if the sets $\I(S_X)$ and $\I(S_Z)$ are not disjoint. We will henceforth assume that none of these conditions apply to $P$ so that the Shapley values are potentially non-null.
We now wish to understand what players of $\game^\I$ are dummies. 
The following Lemma establishes a relationship between dummies of $\game$ and dummies of $\game^\I$.


\begin{lemma}
If $\I^{-1}(\{i\})$ is a set of dummies of $\game$,
then $i$ is a dummy player of $\game^\I$.
This is true for any games $\nu$ and $\nu^\I$ and more specifically Baseline Interventional Games
$\nu_{\hpxz}$.
\label{lemma:dummy_partition}
\end{lemma}
\begin{proof}
First notice that if $D\subseteq [d']$ is a set of dummy players, then it easily follows by induction that for all $S'\subseteq [d']\setminus D$ we have $\game(S'\cup D)=\game(S')$. Now let $S\subseteq [d]\setminus \{i\}$ and observe that when $\I^{-1}(\{i\})$ is a set of dummy players for $\game$ we have:
\begin{align*}
\game^\I(S\cup \{i\}) &= \game\big(\,\I^{-1}(S\cup \{i\})\,\big)\\
&= \game\big(\,\I^{-1}(S)\cup \I^{-1}(\{i\})\,\big)\\
&= \game\big(\,\I^{-1}(S)\,\big)\\
&= \game^\I(S)
\end{align*}
\end{proof}
\vspace{-0.05cm}
We now present an analogue to \textbf{Lemma \ref{lemma:SAB_dummy}}.

\begin{lemma}
    If $i$ does not belong to $\I(S_{XZ})$, then it is a dummy player for the game $\game^\I_{h_P,\bm{x},\bm{z}}$.
    \label{lemma:SXZ_dummy_part}
\end{lemma}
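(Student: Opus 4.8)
The plan is to reduce this statement to two results already established: \textbf{Lemma \ref{lemma:SAB_dummy}}, which says that any feature outside $S_{XZ}$ is a dummy of the game $\game_{h_P,\bm{x},\bm{z}}$ on $[d']$, and \textbf{Lemma \ref{lemma:dummy_partition}}, which lifts the dummy status of an entire group $\I^{-1}(\{i\})$ in $\game$ to the dummy status of the player $i$ in $\game^\I$. The whole argument is then just a matter of chaining these two facts together.

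First I would translate the hypothesis $i\notin \I(S_{XZ})$ into a statement about the group $\I^{-1}(\{i\})$. Observe that $i\in \I(S_{XZ})$ holds precisely when some feature $j\in S_{XZ}$ satisfies $\I(j)=i$, i.e. precisely when $\I^{-1}(\{i\})\cap S_{XZ}\neq \emptyset$. Taking the contrapositive, the assumption $i\notin \I(S_{XZ})$ is equivalent to $\I^{-1}(\{i\})\cap S_{XZ}=\emptyset$, so every feature $j$ in the group $\I^{-1}(\{i\})$ lies outside $S_{XZ}$.

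Next I would apply \textbf{Lemma \ref{lemma:SAB_dummy}} feature by feature: since each $j\in \I^{-1}(\{i\})$ satisfies $j\notin S_{XZ}$, each such $j$ is a dummy feature of the game $\game_{h_P,\bm{x},\bm{z}}$. Hence the whole group $\I^{-1}(\{i\})$ is a set of dummies of $\game_{h_P,\bm{x},\bm{z}}$. Finally, invoking \textbf{Lemma \ref{lemma:dummy_partition}} with this set of dummies immediately yields that $i$ is a dummy player of $\game^\I_{h_P,\bm{x},\bm{z}}$, as desired.

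Since the two ingredient lemmas do all the heavy lifting, there is no real obstacle here; the only point that requires care is the set-theoretic translation in the first step, namely correctly reading $i\notin\I(S_{XZ})$ as the disjointness $\I^{-1}(\{i\})\cap S_{XZ}=\emptyset$ rather than as the weaker $\I^{-1}(\{i\})\not\subseteq S_{XZ}$. Once this equivalence is pinned down, the composition of \textbf{Lemma \ref{lemma:SAB_dummy}} and \textbf{Lemma \ref{lemma:dummy_partition}} closes the proof.
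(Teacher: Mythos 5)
Your proof is correct and follows exactly the same route as the paper: translate $i\notin\I(S_{XZ})$ into $\I^{-1}(\{i\})\cap S_{XZ}=\emptyset$, conclude via \textbf{Lemma \ref{lemma:SAB_dummy}} that the group $\I^{-1}(\{i\})$ consists of dummies of $\game_{h_P,\bm{x},\bm{z}}$, and lift this to $\game^\I_{h_P,\bm{x},\bm{z}}$ with \textbf{Lemma \ref{lemma:dummy_partition}}. Your explicit attention to reading $i\notin\I(S_{XZ})$ as disjointness rather than non-inclusion is a careful spelling-out of a step the paper states without comment.
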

\begin{proof}
If $i \notin \I(S_{XZ})$ then $\I^{-1}(\{i\})\cap S_{XZ}=\emptyset$. So
by \textbf{Lemma \ref{lemma:SAB_dummy}}, $\I^{-1}(\{i\})$ are dummies of $\game_{h_P,\bm{x},\bm{z}}$ which by \textbf{Lemma \ref{lemma:dummy_partition}} implies that $i$ is a dummy feature of $\game^\I_{h_P,\bm{x},\bm{z}}$.
\end{proof}

As only the features in $S_{XZ}$ are not dummies of $\nu_{h_P,\bm{x},\bm{z}}$, it follows that 
\[
\nu^\I_{h_P,\bm{x},\bm{z}}(S)=\nu_{h_P,\bm{x},\bm{z}}(\I^{-1}(S)\cap S_{XZ}) \quad \text{for all $S\subseteq[d]$.}
\]
We will use this fact in the sequel.
The following key Theorem shows how to efficiently compute the Shapley values of non-dummy players.


\begin{theorem}[Complexity Reduction for Partitions]
If $P$ contains no type \typeB edges and the sets $\I(S_X)$ and $\I(S_Z)$ are disjoint,
we get
\begin{align}
    i \notin \I(S_{XZ}) &\Rightarrow \phi_i^\I(h_P, \bm{x},\bm{z}) = 0  \label{final_dumb_part}\\ 
    i \in \I(S_X) &\Rightarrow \phi_i^\I(h_P, \bm{x},\bm{z}) = W(|\I(S_X)|-1, |\I(S_{XZ})|)v_l \label{final_S_X_part} \\
    i \in \I(S_Z) &\Rightarrow \phi^\I_i(h_P, \bm{x},\bm{z}) = -W(|\I(S_X)|, |\I(S_{XZ})|)v_l. \label{final_S_Z_part}
\end{align}
\label{thm:complexity_reduction_part}
\end{theorem}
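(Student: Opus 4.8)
The plan is to mirror the proof of \textbf{Theorem \ref{thm:complexity_reduction}}, substituting the role played there by the Flow Blocking Lemma (\textbf{Lemma \ref{lemma:flow_block}}) for the game $\game_{h_P,\bm{x},\bm{z}}$ with an analogous statement for the partition game $\game^\I_{h_P,\bm{x},\bm{z}}$. The first and decisive step is therefore to establish a ``Flow Blocking for partitions'' result. Starting from the identity $\game^\I_{h_P,\bm{x},\bm{z}}(S)=\game_{h_P,\bm{x},\bm{z}}(\I^{-1}(S)\cap S_{XZ})$ recorded just after \textbf{Lemma \ref{lemma:SXZ_dummy_part}}, and invoking \textbf{Lemma \ref{lemma:flow_block}}, the value $\game^\I_{h_P,\bm{x},\bm{z}}(S)$ equals $v_l$ exactly when $\I^{-1}(S)\cap S_{XZ}=S_X$ and is $0$ otherwise. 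The crux is then the set-theoretic claim that, for $S\subseteq\I(S_{XZ})$, the condition $\I^{-1}(S)\cap S_{XZ}=S_X$ is equivalent to $S=\I(S_X)$.

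I would prove this claim by unpacking the two inclusions. On one hand, $S_X\subseteq \I^{-1}(S)\cap S_{XZ}$ amounts to $\I(S_X)\subseteq S$; on the other hand, $\I^{-1}(S)\cap S_{XZ}\subseteq S_X$ amounts to $\I(S_Z)\cap S=\emptyset$. Using the hypothesis that $\I(S_X)$ and $\I(S_Z)$ are disjoint and that $S\subseteq \I(S_X)\cup\I(S_Z)$, one decomposes $S$ and these two conditions collapse to $S=\I(S_X)$. This delivers the exact analogue of \textbf{Lemma \ref{lemma:flow_block}},
$$
\game^\I_{h_P,\bm{x},\bm{z}}(S)=
\begin{cases} v_l &\text{if } S=\I(S_X)\\ 0 & \text{otherwise,}\end{cases}
\qquad S\subseteq \I(S_{XZ}),
$$
with $S_X$ replaced by $\I(S_X)$ and $S_{XZ}$ replaced by $\I(S_{XZ})$; in other words, $\game^\I$ restricted to $\I(S_{XZ})$ is again an \texttt{AND}-style game whose active set is $\I(S_X)$.

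Next, \textbf{Lemma \ref{lemma:SXZ_dummy_part}} shows that every $i\notin\I(S_{XZ})$ is a dummy of $\game^\I_{h_P,\bm{x},\bm{z}}$, which immediately yields Equation \eqref{final_dumb_part} and, through the Dummy Reduction Lemma (\textbf{Lemma \ref{lemma:dummy}}), lets me restrict the Shapley summation to coalitions inside $\I(S_{XZ})$:
$$
\phi^\I_i(h_P,\bm{x},\bm{z})=\sum_{S\subseteq \I(S_{XZ})\setminus\{i\}} W(|S|,|\I(S_{XZ})|)\big(\game^\I_{h_P,\bm{x},\bm{z}}(S\cup\{i\})-\game^\I_{h_P,\bm{x},\bm{z}}(S)\big),\quad i\in\I(S_{XZ}).
$$
I would then run the same two-case argument as in \textbf{Theorem \ref{thm:complexity_reduction}}. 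When $i\in\I(S_X)$, the partition Flow Blocking result forces the only surviving term to be $S=\I(S_X)\setminus\{i\}$, with marginal contribution $v_l-0=v_l$ and weight $W(|\I(S_X)|-1,|\I(S_{XZ})|)$, giving Equation \eqref{final_S_X_part}. When $i\in\I(S_Z)$, disjointness of the images forces the only surviving term to be $S=\I(S_X)$, with marginal contribution $0-v_l=-v_l$ and weight $W(|\I(S_X)|,|\I(S_{XZ})|)$, giving Equation \eqref{final_S_Z_part}.

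I expect the main obstacle to be entirely contained in the first step, namely correctly handling the interaction between the preimage map $\I^{-1}$, the intersection with $S_{XZ}$, and the disjointness hypothesis to pin down that $S=\I(S_X)$ is the unique coalition yielding value $v_l$. The subtlety is that $\I$ need not be injective on $S_{XZ}$, so the argument must be phrased at the level of the image sets $\I(S_X)$ and $\I(S_Z)$ rather than individual features; it is precisely the disjointness of these images that prevents the \texttt{AND}-structure from degenerating. Once this analogue of Flow Blocking is secured, the remainder of the argument is a verbatim transcription of the reasoning behind \textbf{Theorem \ref{thm:complexity_reduction}}.
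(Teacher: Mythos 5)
Your proposal is correct and takes essentially the same route as the paper's proof: establish that $i\notin\I(S_{XZ})$ are dummies via \textbf{Lemma \ref{lemma:SXZ_dummy_part}}, apply the Dummy Reduction of \textbf{Lemma \ref{lemma:dummy}}, transfer \textbf{Lemma \ref{lemma:flow_block}} through the identity $\game^\I_{h_P,\bm{x},\bm{z}}(S)=\game_{h_P,\bm{x},\bm{z}}(\I^{-1}(S)\cap S_{XZ})$, and run the same two-case analysis as in \textbf{Theorem \ref{thm:complexity_reduction}}. The only difference is that you explicitly verify the set-theoretic equivalence $\I^{-1}(S)\cap S_{XZ}=S_X \iff S=\I(S_X)$ for $S\subseteq\I(S_{XZ})$ using the disjointness of $\I(S_X)$ and $\I(S_Z)$, a step the paper asserts without detail; your elaboration of it is correct.
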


\begin{proof}
The equation \ref{final_dumb_part} is a direct consequence of \textbf{Lemma \ref{lemma:SXZ_dummy_part}}.
We now prove \ref{final_S_X_part} and \ref{final_S_Z_part} separately.

Since all features in $[d]\setminus \I(S_{XZ})$ are dummies of $\nu^\I_{h_P,\bm{x},\bm{z}}$, we can employ \textbf{Lemma \ref{lemma:dummy}} to rewrite the Shapley value definition
\begin{equation}
    \phi_i(\game^\I)
    =\mathlarger{\sum}_{S\subseteq \I(S_{XZ})\setminus\{i\}} W(|S|, |\I(S_{XZ})|\,)\big(\game_{h_P,\bm{x},\bm{z}}^\I(S\cup \{i\}) - \game_{h_P,\bm{x},\bm{z}}^\I(S)\big)\qquad i\in\I(S_{XZ}).
    \label{eq:phi_part_reduced}
\end{equation}
This summation contains an exponential number of terms, and like previous Theorems, only one of these terms will be non-zero. To identify which term is non-zero, we will again rely on \textbf{Lemma \ref{lemma:flow_block}}.
To do so, we note that since $\I(S_X)$ and $\I(S_Z)$ are disjoint, the sets $S_X$ and $S_Z$ must also be disjoint. Hence by \textbf{Lemma \ref{lemma:flow_block}}, we have $\game_{h_P,\bm{x},\bm{z}}(S')=0$ for all $S'\subseteq S_{XZ}$ except for $S'=S_X$. Since as noted earlier $\game^\I_{h_P,\bm{x},\bm{z}}(S)=\game_{h_P,\bm{x},\bm{z}}(\I^{-1}(S)\cap S_{XZ})$ for all $S\subseteq [d]$, \textbf{Lemma \ref{lemma:flow_block}} ensures that we have the following: $\game^\I_{h_P,\bm{x},\bm{z}}(S)=0$ for all $S\subseteq \I(S_{XZ})$ except when $\I^{-1}(S)\cap S_{XZ}=S_X$, or equivalently $S=\I(S_X)$. 
Therefore the only way a difference $\game_{h_P,\bm{x},\bm{z}}^\I(S\cup \{i\}) - \game^\I_{h_P,\bm{x},\bm{z}}(S)$ can be non-null is when either $S$ or $S\cup\{i\}$ is equal to $\I(S_X)$.


Now, suppose $i\in \I(S_X)$. The only $S\subseteq \I(S_{XZ})\setminus\{i\}$ for which the difference $\game_{h_P,\bm{x},\bm{z}}^\I(S\cup \{i\}) - \game^\I_{h_P,\bm{x},\bm{z}}(S)$ is not zero is when $S=\I(S_X)\setminus \{i\}$. For this set $S$, we get a contribution
$W(|S|, |\I(S_{XZ})|)v_l =W(|\I(S_X)\setminus\{i\}|, |\I(S_{XZ})|)v_l=W(|\I(S_X)|-1, |\I(S_{XZ})|)v_l$ thus proving Equation \ref{final_S_X_part}.


Finally, suppose $i\in \I(S_Z)$ and so $i\notin \I(S_X)$. Then 
the only way the difference $\game_{h_P,\bm{x},\bm{z}}^\I(S\cup \{i\}) - \game_{h_P,\bm{x},\bm{z}}^\I(S)$ can be non-null
is when $S=\I(S_X)$. For this set $S$, we get a contribution
$-W(|S|, |\I(S_{XZ})|)v_l =-W(|\I(S_X)|, |\I(S_{XZ})|)v_l$ thus proving Equation \ref{final_S_Z_part}.
\end{proof}

We deduce from \textbf{Lemma \ref{lemma:disjoint_part}} and \textbf{Theorem \ref{thm:complexity_reduction_part}} that computing Shapley values for the game $\nu^\I$ can be achieved by substituting the sets $\I(S_X)$ and $\I(S_Z)$ to
$S_X$ and $S_Z$ in the original TreeSHAP algorithm, as shown in Algorithm \ref{alg:tree_shap_part}.
The simplicity with which we were able to  adapt TreeSHAP to partitions is a by-product of the theoretical results we obtained on TreeSHAP. We conclude by noting that if each feature is its own group (\textit{i.e.} $d'=d$ and $\I(i)=i$), Algorithm \ref{alg:tree_shap_part} falls back to Algorithm \ref{alg:tree_shap}.
The coming subsection describes an application of Partition-TreeSHAP.

\begin{algorithm}
\caption{Interventional Partition-TreeSHAP}
\begin{algorithmic}[1]
\Procedure{Recurse}{$n$, $\I(S_X)$, $\I(S_Z)$}
    \If{$n\in L$}
        \For{$i\in \I(S_X)\cup \I(S_Z)$}\Comment{cf. Theorem \ref{thm:complexity_reduction}}
            \If{$i \in \I(S_X)$}
                \State $\phi_i \pluseq W(|\I(S_X)|-1, |\I(S_{XZ})|)v_n$;
            \Else
                \State $\phi_i \minuseq W(|\I(S_X)|, |\I(S_{XZ})|)v_n$;
            \EndIf
        \EndFor
     \ElsIf{$\bm{x}_\text{child}=
     \bm{z}_\text{child}$}\Comment{\hyperlink{scenario_1}{Scenario 1}}
        \State \Return \Call{Recurse}{$\bm{x}_\text{child}, \I(S_X), \I(S_Z)$};
    \ElsIf{$\I(i_n)\in \I(S_X)\cup \I(S_Z)$}\Comment{\hyperlink{scenario_2}{Scenario 2}}
        \If{$\I(i_n) \in \I(S_X)$}
            \State \Return \Call{Recurse}{$\bm{x}_\text{child}, \I(S_X), \I(S_Z)$};
        \Else
            \State \Return \Call{Recurse}{$\bm{z}_\text{child}, \I(S_X), \I(S_Z)$};
        \EndIf
    \Else\Comment{Scenario 3}
        \State \Call{Recurse}{$\bm{x}_\text{child}, \,\I(S_X)\cup\I(\{i_n\}),\, \I(S_Z)$};
        \State \Call{Recurse}{$\bm{z}_\text{child},\, \I(S_X),\, \I(S_Z)\cup\I(\{i_n\})$};
    \EndIf
\EndProcedure
\State $\bm{\phi}=\bm{0}$;
\State \Call{Recurse}{0, $\I(S_X)=\emptyset$, $\I(S_Z)=\emptyset$};
\State \Return $\bm{\phi}$;
\end{algorithmic}
\label{alg:tree_shap_part}
\end{algorithm}

\subsection{Application to Feature Embedding}

Classic training procedures for Decision Trees partition the input space at each internal node $n$ via Boolean functions of the form $\mathbbm{1}(x_{i_n}\leq \gamma_n)$. However, this model architecture assumes that feature values can be ordered, which always holds for numerical data, but not necessarily for categorical features \textit{e.g.} $x_i\in [\text{Dog}, \text{Cat}, \text{Hamster}]$.
In such scenarios, the model architecture itself must be adapted \citep[Section 9.2.4]{hastie2009elements}, or the categorical features must be embedded in a metric space during pre-processing \citep{guo2016entity}. An efficient approach for computing Interventional Shapley values in such contexts depends on how categorical features are handled by the model. In this subsection, we show how Partition-TreeSHAP can compute Shapley values for categorical features when the latter approach is used. Namely, we consider here the case where the tree growth strategies are kept the same and categorical features are embedded before being fed to the model. This choice was made because \texttt{scikit-learn} Decision Trees do not yet natively support categorical features\footnote{\url{https://github.com/scikit-learn/scikit-learn/pull/12866}}.
Concretely, assume that each feature $i\in[d]$ takes value in a set $D_i$ which is assumed to be either $\R$ for numerical features or a finite set $[C]$ for a categorical feature taking $C$ possible values. Before learning a tree, the data is preprocessed by choosing for every feature $i\in[d]$ an embedding function $\xi_i:D_i\to \R^{d_i}$. Several types of embedding are possible.
\begin{itemize}
    \item The identity embedding $\xi_i(x_i) = x_i \in \R^1$ is used for numerical features where $D_i=\R$.
    \item The one-hot-encoding $\xi_i(x_i) = \bm{\delta}_{x_i}\in \R^C$
    when $x_i$ is categorical and takes $C$ possible values, i.e. $D_i=[C]$. The components of the vector $\bm{\delta}_j$ are all $0$ except for the $j$th component which is $1$.
    \item The entity embedding $\xi_i(x_i) = \mathbf{E}[:, i]\in \R^{d_i}$
    where the embedding matrix $\mathbf{E}\in\R^{d_i\times C}$ is learned \emph{a priori} via a Neural Network \citep{guo2016entity}.
\end{itemize}

\begin{figure}[t]
    \centering
    \resizebox{7cm}{!}{\begin{tikzpicture}
    \def\y{-1.5}

    \draw[step=0.5] (0, 0) grid (2, 0.5);
    \node at (-0.25, 0.25) {$\bm{x}$};
    \node at (0.25, 0.25) {$\textcolor{blue}{x_0}$};
    \node at (0.75, 0.25) {$\textcolor{red}{x_1}$};
    \node at (1.25, 0.25) {$\textcolor{orange}{x_2}$};
    \node at (1.75, 0.25) {$\green{x_3}$};

    \node at (-1.5, \y+0.25) {$\bm{\xi}(\bm{x})$};
    \draw[step=0.5] (-1, \y) grid (3, \y+0.5);
    \node at (-0.75, \y+0.25) {$\textcolor{blue}{x_0}$};
    \node at (-0.25, \y+0.25) {$\textcolor{red}{0}$};
    \node at (0.25, \y+0.25) {$\textcolor{red}{1}$};
    \node at (0.75, \y+0.25) {$\textcolor{orange}{0}$};
    \node at (1.25, \y+0.25) {$\textcolor{orange}{0}$};
    \node at (1.75, \y+0.25) {$\textcolor{orange}{1}$};
    \node at (2.25, \y+0.25) {$\textcolor{orange}{0}$};
    \node at (2.75, \y+0.25) {$\green{x_3}$};
    
    \draw[-stealth] (0,0) -- node[anchor=east,scale=0.75] {$\textcolor{blue}{\xi_0(x_0)}$} (-0.75,\y+0.5);
    \draw[-stealth] (0.75,0) -- node[xshift=-0.4cm,scale=0.75] {$\textcolor{red}{\xi_1(x_1)}$} (0.2,\y+0.5);
    \draw[-stealth] (1.3,0) -- node[xshift=-0.4cm,scale=0.75] {$\textcolor{orange}{\xi_2(x_2)}$} (1.7,\y+0.5);
    \draw[-stealth] (2,0) -- node[anchor=west,scale=0.75] {$\green{\xi_3(x_3)}$} (2.7,\y+0.5);
    
     \draw[-stealth] (1,\y) -- node[anchor=west] {$\mathcal{I}$} (1,\y-1);
     \def\y{-3}
    \draw[step=0.5] (-1, \y) grid (3, \y+0.5);
    \node at (-0.75, \y+0.25) {$\textcolor{blue}{0}$};
    \node at (-0.25, \y+0.25) {$\textcolor{red}{1}$};
    \node at (0.25, \y+0.25) {$\textcolor{red}{1}$};
    \node at (0.75, \y+0.25) {$\textcolor{orange}{2}$};
    \node at (1.25, \y+0.25) {$\textcolor{orange}{2}$};
    \node at (1.75, \y+0.25) {$\textcolor{orange}{2}$};
    \node at (2.25, \y+0.25) {$\textcolor{orange}{2}$};
    \node at (2.75, \y+0.25) {$\green{3}$};
    
\end{tikzpicture}}
    \caption{Example of a feature embedding $\bm{\xi}(\bm{x})\in \R^8$. Here, $x_0 $ and $x_3$
    are kept intact while $x_1$ and $x_2$ are one-hot-encoded. The bottom of the figure presents the function $\I$ that maps the index of an embedded coordinate to the index of its associated $\bm{x}$ component.}    
    \label{fig:ohe}
\end{figure}
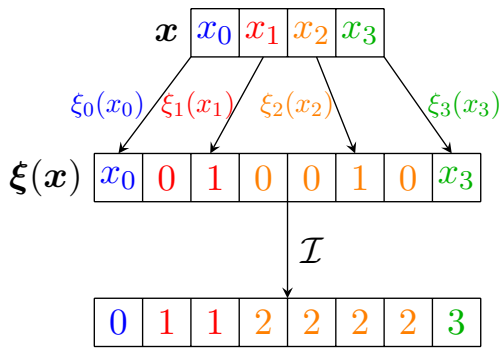

Letting $d'=\sum_{i=0}^d d_i$, we shall also define the global embedding function $\bm{\xi}:\R^d \to \R^{d'}$ that maps any given feature vector $\bm{x}$ to the concatenation
$\bm{\xi}(\bm{x}) = [\xi_0(x_0),\ldots, \xi_d(x_d)]^T$. Figure \ref{fig:ohe} shows an
example of embedding that relies on a mix of identity embeddings and one-hot encodings. For both training and prediction purposes, the vector $\bm{\xi}(\bm{x})$ stands as the input for the model in place of the feature vector $\bm{x}$. Therefore, applying TreeSHAP on the resulting Decision Trees will yield a separate value for each of the $d'$ components in the embedded vector. However the $d'$ features associated with the embedding lack interpretability and we argue it is more desirable to obtain a Shapley value for each $d$ component of the original vector $\bm{x}$.
Notice that our embedding $\bm{\xi}$ induces a surjective function $\I:[d']\to[d]$ given by
\begin{equation} 
\I(i) = \min \bigg\{j\in [d] : i < \sum_{k=0}^jd_k\bigg\}
\label{eq:I_def}
\end{equation}
encoding a partition $\{P_i\mid i\in[d]\}$ of $[d']$ where each set $P_i$ contains the indices of coordinates corresponding to feature $i$ via the embedding $\bm{\xi}$.
We therefore propose to use Partition-TreeSHAP with $\bm{\xi}(\bm{x}),\bm{\xi}(\bm{z})$ in place of $\bm{x},\bm{z}$ and the mapping $\I$ of Equation \ref{eq:I_def}. Doing so will provide a Shapley value for each of the $d$ component of the original vector.
Since feature embeddings are not supported in the current implementation of TreeSHAP from the SHAP library, we see Partition-TreeSHAP as a key contribution from a theoretical and practical perspective.

\section{Conclusion}\label{sec:conclu}

The Interventional TreeSHAP algorithm has previously revolutionized the computation of Shapley values for explaining decision tree ensembles. Indeed, before its invention, the exponential burden of
Shapley values severely limited their application to Machine Learning problems, which typically involve large numbers of features. In this work, we take a 
step back and give a presentation of the mathematics behind the success of Interventional TreeSHAP.
In doing so, our goal is both educational and research-oriented.
On the one hand, the content of this paper can serve as a reference for a class on the mathematics behind interpretable Machine Learning. To this end, we also provide a simplified C++ implementation of Interventional TreeSHAP wrapped in Python as teaching material for the methods\footnote{\url{https://github.com/gablabc/Understand_TreeSHAP}}. Our implementation is not meant to rival that of the SHAP library, which makes use of additional computational optimizations at the cost of clarity.
On the other hand, we have shown that our theoretical results on Interventional TreeSHAP can be leveraged to effortlessly adapt the Shapley value computations to other game theory
indices like Shapley-Taylor indices. Moreover, we have extended Interventional TreeSHAP to tasks where categorical features are embedded in a metric space before being fed to the model. We hope that our proof can stimulate future research in the 
application of game theory for post-hoc explanations of tree ensembles and similar models such as RuleFits.

\section*{Acknowledgements}

The authors wish to thank the DEEL project CRDPJ 537462-18 funded 
by the National Science and Engineering Research Council of Canada 
(NSERC) and the Consortium for Research and Innovation in Aerospace 
in Québec (CRIAQ), together with its industrial partners Thales Canada 
inc, Bell Textron Canada Limited, CAE inc and Bombardier inc.

\newpage

\vskip 0.2in
\bibliography{biblio}

\begin{thebibliography}{13}
\providecommand{\natexlab}[1]{#1}
\providecommand{\url}[1]{\texttt{#1}}
\expandafter\ifx\csname urlstyle\endcsname\relax
  \providecommand{\doi}[1]{doi: #1}\else
  \providecommand{\doi}{doi: \begingroup \urlstyle{rm}\Url}\fi

\bibitem[Arrieta et~al.(2020)Arrieta, D{\'\i}az-Rodr{\'\i}guez, Del~Ser,
  Bennetot, Tabik, Barbado, Garc{\'\i}a, Gil-L{\'o}pez, Molina, Benjamins,
  et~al.]{arrieta2020explainable}
Alejandro~Barredo Arrieta, Natalia D{\'\i}az-Rodr{\'\i}guez, Javier Del~Ser,
  Adrien Bennetot, Siham Tabik, Alberto Barbado, Salvador Garc{\'\i}a, Sergio
  Gil-L{\'o}pez, Daniel Molina, Richard Benjamins, et~al.
\newblock Explainable artificial intelligence (xai): Concepts, taxonomies,
  opportunities and challenges toward responsible ai.
\newblock \emph{Information Fusion}, 58:\penalty0 82--115, 2020.

\bibitem[Breiman(2001)]{breiman2001random}
Leo Breiman.
\newblock Random forests.
\newblock \emph{Machine learning}, 45\penalty0 (1):\penalty0 5--32, 2001.

\bibitem[Chen et~al.(2020)Chen, Lundberg, and Lee]{chen_blog}
Hugh Chen, Scott~M Lundberg, and Su-In Lee.
\newblock Understanding shapley value explanation algorithms for trees.
\newblock \url{https://hughchen.github.io/its_blog/index.html}, 2020.
\newblock Accessed: 2022.

\bibitem[Friedman(2001)]{friedman2001greedy}
Jerome~H Friedman.
\newblock Greedy function approximation: a gradient boosting machine.
\newblock \emph{Annals of statistics}, pages 1189--1232, 2001.

\bibitem[Grinsztajn et~al.(2022)Grinsztajn, Oyallon, and
  Varoquaux]{grinsztajn2022tree}
L{\'e}o Grinsztajn, Edouard Oyallon, and Ga{\"e}l Varoquaux.
\newblock Why do tree-based models still outperform deep learning on tabular
  data?
\newblock \emph{arXiv preprint arXiv:2207.08815}, 2022.

\bibitem[Guo and Berkhahn(2016)]{guo2016entity}
Cheng Guo and Felix Berkhahn.
\newblock Entity embeddings of categorical variables.
\newblock \emph{arXiv preprint arXiv:1604.06737}, 2016.

\bibitem[Hastie et~al.(2009)Hastie, Tibshirani, Friedman, and
  Friedman]{hastie2009elements}
Trevor Hastie, Robert Tibshirani, Jerome~H Friedman, and Jerome~H Friedman.
\newblock \emph{The elements of statistical learning: data mining, inference,
  and prediction}, volume~2.
\newblock Springer, 2009.

\bibitem[Holzinger et~al.(2022)Holzinger, Saranti, Molnar, Biecek, and
  Samek]{holzinger2022explainable}
Andreas Holzinger, Anna Saranti, Christoph Molnar, Przemyslaw Biecek, and
  Wojciech Samek.
\newblock Explainable ai methods-a brief overview.
\newblock In \emph{International Workshop on Extending Explainable AI Beyond
  Deep Models and Classifiers}, pages 13--38. Springer, 2022.

\bibitem[Lundberg and Lee(2017)]{lundberg2017unified}
Scott~M Lundberg and Su-In Lee.
\newblock A unified approach to interpreting model predictions.
\newblock In \emph{Proceedings of the 31st international conference on neural
  information processing systems}, pages 4768--4777, 2017.

\bibitem[Lundberg et~al.(2020)Lundberg, Erion, Chen, DeGrave, Prutkin, Nair,
  Katz, Himmelfarb, Bansal, and Lee]{lundberg2020local}
Scott~M Lundberg, Gabriel Erion, Hugh Chen, Alex DeGrave, Jordan~M Prutkin,
  Bala Nair, Ronit Katz, Jonathan Himmelfarb, Nisha Bansal, and Su-In Lee.
\newblock From local explanations to global understanding with explainable ai
  for trees.
\newblock \emph{Nature machine intelligence}, 2\penalty0 (1):\penalty0 56--67,
  2020.

\bibitem[Shapley(1953)]{shapley1953value}
Lloyd~S Shapley.
\newblock A value for n-person games.
\newblock \emph{Contributions to the Theory of Games}, pages 307--317, 1953.

\bibitem[Sundararajan and Najmi(2020)]{sundararajan2020many}
Mukund Sundararajan and Amir Najmi.
\newblock The many shapley values for model explanation.
\newblock In \emph{International conference on machine learning}, pages
  9269--9278. PMLR, 2020.

\bibitem[Sundararajan et~al.(2020)Sundararajan, Dhamdhere, and
  Agarwal]{sundararajan2020shapley}
Mukund Sundararajan, Kedar Dhamdhere, and Ashish Agarwal.
\newblock The shapley taylor interaction index.
\newblock In \emph{International conference on machine learning}, pages
  9259--9268. PMLR, 2020.

\end{thebibliography}

\end{document}